\documentclass{article}
\usepackage[margin=3.2cm]{geometry}

\usepackage{booktabs} 
\usepackage{hyperref}
\usepackage{amsmath,xfrac,stackengine,amsthm}
\usepackage{graphicx}
\usepackage{amssymb}
\usepackage{dsfont} 
\usepackage{xcolor}
\usepackage[font=small,skip=2pt]{caption}
\usepackage[english]{babel}
\usepackage{enumitem}
\usepackage{subfig,float}
\usepackage{algorithm,algpseudocode} 
\usepackage{xfrac}

\usepackage[title]{appendix}

\renewcommand{\paragraph}[1]{\vspace{2mm}\noindent\textbf{#1}}
\setlength{\parindent}{0em}
\setlength{\parskip}{2mm}

\newcommand{\diag}[1]{\text{diag}(#1)}

\newcommand{\blkdiag}[1]{\text{blkdiag}\left(#1\right)}
\newcommand{\spanning}[1]{\text{span}(#1)}
\newcommand{\dimension}[1]{\text{dim}(#1)}
\newcommand{\norm}[1]{\left\lVert#1\right\rVert}

\newcommand{\defequal}{\overset{\Delta}{=}}
\newcommand{\1}{{1}}
\newcommand{\0}{{0}}

\DeclareMathOperator*{\argmin}{arg\,min}
\newcommand{\Prob}[1]{{{P}\hspace{-1px}\left(#1\right)}}
\newcommand{\E}[1]{\mathbf{E}\hspace{-1px}\left[#1\right]}

\newcommand{\Rbb}{\mathbb{R}}

\renewcommand{\deg}{d} 
\newcommand{\davg}{\deg_{\text{avg}}}
\newcommand{\sintheta}[2]{\sin{\hspace{0.0px}\Theta\hspace{-0.0px}\big( #1, #2 \big)}}

\renewcommand{\c}[1]{{#1}_{\text{c}}} 
\newcommand{\p}[1]{\widetilde{#1}}

\newcommand{\rhomin}{\varrho_{\text{min}}} 
\newcommand{\rhomax}{\varrho_{\text{max}}} 

\newcommand{\Pmax}{P_{\text{max}}}
 
\newcommand{\alg}{\textsf{REC}} 
\newcommand{\kmeans}[3]{\mathcal{F}_{#1}\hspace{-0.5px}(#2,#3)}

\newtheoremstyle{style}
{6pt} 
{4pt} 
{\itshape} 
{} 
{\bfseries} 
{.} 
{.5em} 
{} 

\theoremstyle{style}
\newtheorem{theorem}{Theorem}[section]
\newtheorem{definition}{Definition}

\newtheorem{lemma}{Lemma}[section]

\newtheorem{remark}{Remark}
\newtheorem{property}{Property}
\newtheorem{corollary}{Corollary}[section]

\title{Spectrally approximating large graphs with smaller graphs}

\author{Andreas Loukas, Pierre Vandergheynst \\
\'{E}cole Polytechnique F\'{e}d\'{e}rale Lausanne, Switzerland}
\date{ }

\begin{document}

\maketitle


\begin{abstract}

How does coarsening affect the spectrum of a general graph?  
We provide conditions such that the principal eigenvalues and eigenspaces of a coarsened and original graph Laplacian matrices are close. The achieved approximation is shown to depend on standard graph-theoretic properties, such as the degree and eigenvalue distributions, as well as on the ratio between the coarsened and actual graph sizes. Our results carry implications for learning methods that utilize coarsening. For the particular case of spectral clustering, they imply that coarse eigenvectors can be used to derive good quality assignments even without refinement---this phenomenon was previously observed, but  lacked formal justification.
\end{abstract}


\vspace{-4mm}
\section{Introduction}

One of the most wide-spread techniques for sketching graph-structured data is coarsening. As with most sketching methods, instead of solving a large graph problem in its native domain, coarsening involves solving an akin problem of reduced size at a lower cost; the solution can then be inexpensively lifted and refined in the native domain.

The benefits of coarsening are well known both in the algorithmic and machine learning communities. 
There exists a long list of algorithms that utilize it for partitioning~\cite{hendrickson1995multi,karypis1998fast,kushnir2006fast,dhillon2007weighted,wang2014partition} and visualizing~\cite{koren2002fast,walshaw2006multilevel} large graphs in a computationally efficient manner. 
In addition, it has been frequently used to create multi-scale representations of graph-structured data, such as coarse-grained diffusion maps~\cite{lafon2006diffusion}, multi-scale wavelets~\cite{gavish2010multiscale} and pyramids~\cite{shuman2016multiscale}.  

More recently, coarsening is employed as a component of graph convolutional networks analogous to pooling~\cite{bruna2014spectral,defferrard2016convolutional,7974879}. 
Combining the values of adjacent vertices reduces the spatial size of each layer's output, prevents overfitting, and encourages a hierarchical scaling of representations.

Yet, much remains to be understood about the properties and limitations of graph coarsening. 

The majority of theoretical work has so far focused on constructing fast linear solvers using multigrid techniques. These methods are especially relevant for approximating the solution of differential equations on grids and finite-element meshes. Multigrids were also adapted to arbitrary graphs by Koutis et al.~\cite{koutis2011combinatorial} and later on by Livne and Brandt~\cite{livne2012lean}. Based on an optimized version of the Galerkin coarsening, the authors demonstrate an algebraic multi-level approximation scheme that is shown to solve symmetric diagonally dominant linear systems in almost linear time. Similar techniques have also been applied for approximating the Fiedler vector~\cite{urschel2014cascadic,gandhi2016improvement} and solving least-squares problems of the graph Laplacian~\cite{hirani2015graph,colley2017algebraic}. 

Despite this progress, with the exception of certain interlacing results~\cite{chung1997spectral,chen2004interlacing}, it is currently an open question how coarsening affects the spectrum of a general graph. As a consequence, there is no rigorous way of determining to what extend one may coarsen a graph without significantly affecting the performance of spectral methods for graph partitioning and visualization. Moreover, lacking a fundamental understanding of what and how much information is lost, 
we have little hope of equipping coarsening-based learning algorithms, such as graph neural networks, with the appropriate mechanism of constructing multi-scale representations.  

This paper sheds light into some of these questions. Specifically, we consider a one-shot coarsening operation and ask how much it affects the eigenvalues and eigenvectors of the graph Laplacian.  
Key to our argument is the introduced \emph{restricted spectral similarity} (RSS) property, asserting that the Laplacian of the coarsened and actual graphs behave similarly (up to some constants) with respect to an appropriate set of vectors. 
The RSS property is shown to hold for coarsenings constructed by contracting the edges contained in a randomized matching. Moreover, the attained constant depends on the degree distribution and can be controlled by the ratio of the coarsened and actual graph sizes, i.e., the extend of dimensionality reduction. 

We utilize the RSS property to provide spectrum approximation guarantees. It is proven that the principal eigenvalues and eigenspaces of the coarsened and actual Laplacian matrices are close when the RSS constants are not too large. Our results carry implications for non-linear methods for data clustering~\cite{von2007tutorial} and dimensionality reduction~\cite{belkin2003laplacian}. A case in point is spectral clustering: we show that lifted eigenvectors can be used to produce clustering assignments of good quality even without refinement. This phenomenon has been observed experimentally~\cite{karypis1998fast,dhillon2007weighted}, but up to now lacked formal justification. 

\vspace{0mm}
\textbf{Paper organization.} After introducing the RSS property in Section~\ref{sec:analysis}, we demonstrate in Section~\ref{sec:algorithm} how to generate coarsenings featuring small RSS constants. Sections~\ref{sec:spectrum} and~\ref{sec:clustering} then link our results to spectrum preservation and spectral clustering, respectively. The paper concludes by briefly discussing the limitations of our analysis. The proofs can be found in the appendix.

\section{Graph coarsening}
\label{sec:analysis}

Consider a weighted graph $G = (\mathcal{V}, \mathcal{E}, W)$ of $N = |\mathcal{V}|$ vertices and $M = |\mathcal{E}|$ edges, with the edge $e_{ij}$ between vertices $v_i$ and $v_j$ weighed by $w_{ij}\leq 1$. As usual, we denote by $L$ the combinatorial Laplacian of $G$ defined as
\begin{align}
L(i,j) = 
\begin{cases}
\deg_i & \mbox{if}\ i = j \\
-w_{ij} & \mbox{if}\ e_{ij} \in \mathcal{E} \\
0 & \mbox{otherwise}
\end{cases}
\end{align}
and $\deg_i$ the weighted degree of $v_i$. Moreover, let $\lambda_k$ be the $k$-th eigenvalue of $L$ and $x_k$ the associated eigenvector.

\subsection{How to coarsen a graph?}

At the heart of a coarsening lies a surjective (and therefore dimension reducing) mapping $\varphi : \mathcal{V} \rightarrow \c{\mathcal{V}}$ between the original vertex set $\mathcal{V} = \{v_1, \ldots, v_N\}$ and the smaller vertex set $\c{\mathcal{V}} = \{ u_1, \ldots, u_{n}\}$. 
In other words, the coarse graph $\c{G} = (\c{\mathcal{V}}, \c{\mathcal{E}})$ has $m = |\c{\mathcal{E}}|$ and contains every edge $(i,j) \in \mathcal{E}$ for which $\varphi(v_i) \neq \varphi(v_j)$. 
We define the \emph{coarsened Laplacian} as 
\begin{align}
	\c{L} = C L C^\top,
	\label{eq:contraction}
\end{align} 
where the fat $n \times N$ coarsening matrix $C$ describes how different $v \in \mathcal{V}$ are mapped onto the vertex set $\c{\mathcal{V}}$. Similarly, we may \emph{downsample} a vector $x\in \Rbb^N$ supported on $\mathcal{V}$ by the linear transformation
\begin{align}
	\c{x} = C x,
\end{align}
where now $\c{x}\in \Rbb^n$.  We here focus on coarsenings where each vertex $v_i$ is mapped into a single $u_j$. This is equivalent to only considering coarsening matrices with block-diagonal form 
$
C = \blkdiag{c_1^\top, \ldots, c_{n}^\top}, 
$
where each $c_i^\top = [c_i(1), \ldots, c_i(n_i)]$ is the length $n_i$ coarsening weight vector associated with the $i$-th vertex $u_i$ of $\mathcal{V}_F$. In addition, we restrict our attention to constant coarsening weight vectors of unit norm $\| c_i \|_2 = 1$ having as entries $c_i(j) = n_i^{-\sfrac{1}{2}}$.

Matrix $\c{L}$ is {not} a combinatorial Laplacian matrix (e.g., $\c{L} \1 \neq \0$ for $\1$ being the all ones vector), however $\c{L}$ can take the combinatorial Laplacian form using the simple re-normalization $Q \c{L} Q$, where $Q = \diag{ C\1}$. However, we primarily focus on $\c{L}$ and not $Q \c{L} Q$. The main reason is that we are not interested in the action of $\c{L}$ in itself, but on its effect when combined with downsampling. When acting on $\c{x}$, the coarsened Laplacian regains some of the desired properties\footnote{An equivalent construction that preserves the Laplacian form defines the coarsened Laplacian and vector as $L_{c'} = Q C L C^\top Q$  and $x_{c'}= Q^{-1} C x$, respectively. The equivalence follows since $\c{x}^\top \c{L} \c{x} = x^\top C^\top Q^{-1} Q C L C^\top Q Q^{-1} C x = x_{c'}^\top L_{c'} x_{c'}$. }: for instance, for constant $c_i$ (as we assume here) one regains the desired nullspace effect (i.e., $\c{L} C \1 = \0$).

We will also utilize the notion of a coarsening frame:
\begin{definition}[Coarsening frame]
	The coarsening frame $G_F = (\mathcal{V}_F, \mathcal{E}_F, W_F)$ is the subgraph of $G$ induced by set $\mathcal{V}_F = \left\{ v_i \ |\ \exists v_j \text{ with } \varphi(v_i) = \varphi(v_j) \right\}$. 
\end{definition}
Informally, $G_F$ is the subgraph of $G$ that is coarsened (see Figure~\ref{fig:example3}).  
We say that the coarsening corresponds to an \emph{edge contraction} if no two edges of the coarsening frame are themselves adjacent---in other words, $\mathcal{E}_F$ forms a matching on $G$.

\paragraph{Lifting.} We write $\p{x} = C^\top \c{x}$ to do an approximate inverse mapping from $\c{\mathcal{V}}$ to $\mathcal{V}$, effectively \emph{lifting} the dimension from $\Rbb^n$ back to $\Rbb^N$. 
To motivate this choice notice that, even though $\Pi = C^\top C $ is not an identity matrix, it is block diagonal $$\Pi = \blkdiag{{c_1 c_1^\top}{} , \ldots, {c_n c_n^\top}}.$$ 
Moreover, $\Pi$ is an identity mapping for all vectors in its range.

\begin{property}
$\Pi = C^\top C$ is a rank $n$ projection matrix.
\label{property:P}
\end{property} 
\begin{proof}
For each block $\Pi_i$ in the diagonal of $\Pi$, we have $\Pi_i^2 = \Pi_i \Pi_i = c_i c_i^\top c_i c_i^\top = c_i c_i^\top \norm{c_i}^2 = \Pi_i$. 
 The rank of $\Pi$ is $n$ because each diagonal block $\Pi_i$ is of rank one. 
\end{proof}
Therefore, if $x$ is a vector in $\Rbb^N$ and $\c{x} = C x$ is its coarsened counterpart, then $\p{x} = C^\top C x = \Pi x$ is a locality-preserving approximation of $x$ w.r.t. graph $G$. 

\paragraph{A toy example.} Consider the example graph shown in Figure~\ref{fig:example1} 
\begin{figure}[t]
  \centering
  \vspace{0mm}
  \subfloat[$G$.]{\includegraphics[width=0.3\columnwidth]{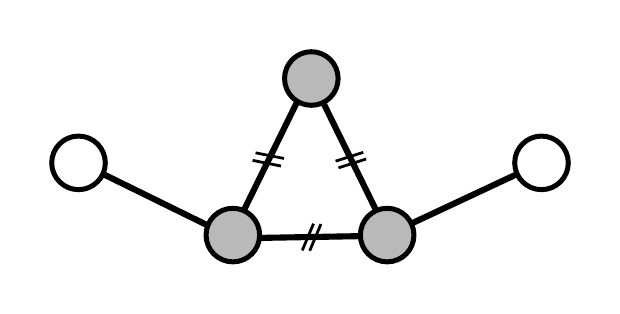}\label{fig:example1}}
  \hfill
  \subfloat[$\c{G}$.]{\includegraphics[width=0.3\columnwidth]{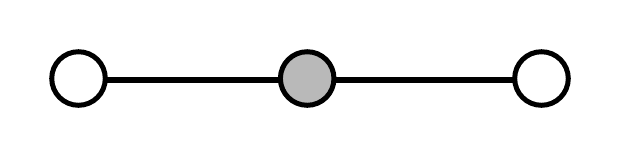}\label{fig:example2}}
  \hfill
  \subfloat[$G_F$.]{\includegraphics[width=0.3\columnwidth]{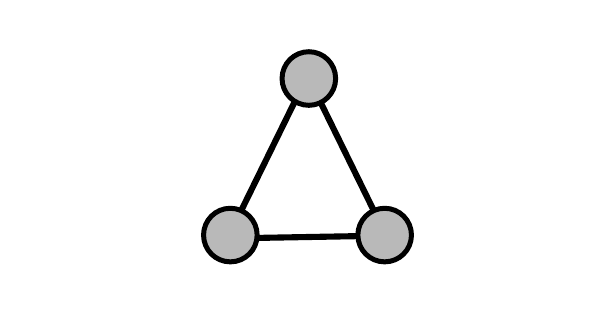}\label{fig:example3}}
 \vspace{1mm}
\caption{A toy coarsening example.\vspace{-3mm}}
\end{figure}
and suppose that we want to coarsen the $n_1 = 3$ gray vertices $\mathcal{V}_F = \{ v_1, v_2, v_3\}$ of $G$ into vertex $u_1$, as shown in Figure~\ref{fig:example2}. Matrices $C$ and $\c{L}$ take the form:
\begin{align}
 C^\top = 
\begin{bmatrix} 
1/\sqrt{3} & 1/\sqrt{3} & 1/\sqrt{3} & 0 & 0 \\ 
0 & 0 & 0  & 1 & 0 \\
0 & 0 & 0 & 0 & 1 
\end{bmatrix}
= 
\begin{bmatrix} 
c_1^\top & 0 \\ 
0 & I_{2} \notag 
\end{bmatrix}
\end{align}
\begin{align}
\c{L} = C L C^\top =
\begin{bmatrix} 
2/3 & -1/\sqrt{3} & -1/\sqrt{3} \\ 
-1/\sqrt{3}  & 1 & 0 \\
-1/\sqrt{3}  & 0 & 1 
\end{bmatrix} \notag 
\end{align}
Above, the $2 \times 2$ identity matrix $I_{2}$ preserves the neighborhood of all vertices not in $\mathcal{V}_F$. The coarsening frame is shown in Figure~\ref{fig:example3}.

\subsection{Restricted spectral similarity}

The objective of coarsening is dual. First, we aim to attain computational acceleration by reducing the dimensionality of our problem. On the other hand, we must ensure that we do not loose too much valuable information, in the sense that the structure of the reduced and original problems should be as close as possible. 

\paragraph{Spectral similarity.} One way to define how close a matrix $B$ approximates the action of matrix $A$ is to establish a spectral similarity relation of the form:
\begin{align}
(1 - \epsilon) \, x^\top A x \leq x^\top B x \leq (1 + \epsilon) \, x^\top A x,
\label{eq:spectral_similarity}
\end{align}
for all $\forall x \in \Rbb^N$ and with $\epsilon$ a positive constant. Stated in our context, \eqref{eq:spectral_similarity} can be rewritten as: 
\begin{align}
(1 - \epsilon) \, x^\top L x \leq \c{x}^\top \c{L} \c{x} \leq (1 + \epsilon) \, x^\top L x 
\label{eq:spectral_similarity2}
\end{align}
for all $ x \in \Rbb^N$ and with $\c{x} = C x$. If the equation holds, we say that matrix $\c{L}$ is an $\epsilon$-spectral approximation of $L$.
In graph theory, the objective of constructing sparse spectrally similar graphs is the main idea of spectral graph sparsifiers, a popular method for accelerating the solution of linear systems involving the Laplacian, initially proposed by Spielman and co-authors~\cite{spielman2011graph,spielman2011spectral}.

In contrast to the sparsification literature however, 
here the dimension of the space changes and one needs to take into account both the Laplacian coarsening ($L$ becomes $\c{L}$) and the vector downsampling operation ($x$ becomes $\c{x}$) in the similarity relation. Yet, from an analysis standpoint, an alternative interpretation is possible. Defining $\p{L} = \Pi  L \Pi $, we re-write 
\begin{align}
\c{x}^\top \c{L} \c{x} = {x}^\top (C^\top C) {L} (C^\top C) {x} = x^\top \Pi  L \Pi  x = x^\top \p{L} x. \notag  
\end{align}
Remembering that $C^\top$ acts as an approximate inverse of $C$, we interpret $\p{L} \in \Rbb^{N\times N}$ as an approximation of $L$ that contains the same information as $\c{L} \in \Rbb^{n\times n}$.

\paragraph{Restricted spectral similarity (RSS).} Equation~\eqref{eq:spectral_similarity2} thus states that the rank $n-1$ matrix $\p{L}$ is an $\epsilon$-spectral approximation of $L$, a matrix of rank $N-1$. Since the two matrices have different rank, the relation cannot hold for every $x \in \Rbb^N$. 
To carry out a meaningful analysis, we focus on an appropriate subset of vectors.

More specifically, we restrict our attention to the first $K$ eigenvectors of $L$ and introduce the following property:

\begin{definition}[Restricted spectral similarity]
Suppose that there exists an integer $K$ and positive constants $\epsilon_k$, such that for every $k \leq K$,
\begin{align}
	(1 - \epsilon_k) \, \lambda_k \leq x_k^\top \p{L} x_k \leq (1 + \epsilon_k) \, \lambda_k.
	\label{eq:restricted_spectral_similarity_1}
\end{align}
Then $\c{L}$ is said to satisfy the restricted spectral similarity (RSS) property with RSS constants $\{\epsilon_k\}_{k=1}^K$.
\label{def:restricted_spectral_similarity}
\end{definition}
The relation to spectral similarity is exposed by substituting $x_k^\top L x_k = \lambda_k$.

For every $k$, inequality \eqref{eq:restricted_spectral_similarity_1} should intuitively capture how close is $C x_k$ to being an eigenvector of $\c{L}$: When $\epsilon_k = 0$, vector $C x_k$ is an eigenvector of $\c{L}$ with eigenvalue $\lambda_k$. On the hand, for $\epsilon_k > 0$, $C x_k$ is not an eigenvector of $\c{L}$, but matrices $L$ and $\p{L}$ alter the length of vectors in the span of $x_k$ in a similar manner (up to $1 \pm \epsilon_k$).

This intuition turns out to be valid. In the following we will demonstrate that the RSS property is a key ingredient in characterizing the relation between the first $K$ eigenvalues and principal eigenspaces of the coarsened and actual Laplacian matrices. In particular, we will prove that the spectrum of $\c{L}$ approximates that of $L$ (up to lifting) when the constants $\epsilon_k$ are sufficiently small. This line of thinking will be developed in Section~\ref{sec:spectrum}.

\begin{remark}
	Though a uniform RSS constant $\epsilon_k \leq \epsilon$ is sufficient to guarantee spectrum preservation, we utilize the individual constants $\{\epsilon_k\}_{k=1}^K$ which lead to tighter bounds.
\end{remark}

\section{A randomized edge contraction algorithm}
\label{sec:algorithm}

Before examining the implications of the RSS property, in this section we propose an algorithm for coarsening a graph that provably produces coarsenings with bounded RSS constants $\epsilon_k$. 

The method, which we refer to as \alg, is described in Algorithm~\ref{algorithm}. \alg\, resembles the common greedy procedure of generating maximal matchings, in that it maintains a candidate set $\mathcal{C}$ of containing all edges that can be added to the matching. At each iteration, a new edge $e_{ij}$ is added and set $\mathcal{C}$ is updated by removing from it all edges in the edge neighborhood set $\mathcal{N}_{ij}$ defined as follows: $$\mathcal{N}_{ij} = \{ e_{pq} \ | \ e_{ip} \in \mathcal{E} \text{ or } e_{iq} \in \mathcal{E} \text{ or } e_{jp} \in \mathcal{E} \text{ or } e_{jq} \in \mathcal{E}\}.$$ 

Yet, \alg\, features two main differences. First, instead of selecting each new edge added to the matching uniformly at random, it utilizes a potential function $\phi$ defined on the edge set, i.e., $\phi: \mathcal{E} \rightarrow \Rbb_+$ with which it controls the probability $p_{ij}$ that every edge is contracted. The second difference is that, at each iteration, \alg\, has a chance $p_{\text{null}}$ of failing to select a valid edge (equivalently of selecting a null edge $e_{\text{null}}$). This choice is not driven by computational concerns, but facilitates the analysis: using the null edge, the probability of choosing an edge from set $\mathcal{C} \cup e_{\text{null}}$ is a valid probability measure at all times, without any need for updating the total potential $\Phi$. 

\begin{remark}
\alg\, is equivalent to the $O(M)$ complexity algorithm that samples from $\mathcal{C}$ directly in line 7 by updating $\Phi$ at every iteration such that its value is $\sum_{e_{ij} \in \mathcal{C}} \phi_{ij}$ (the condition of line 8 is thus unnecessary). Though we suggest to use this latter algorithm in practice, it is easier to express our results using the number of iterations $T$ of Algorithm~\ref{algorithm}. 
\end{remark}

\begin{algorithm}[t!]
\caption{\textsf{Randomized Edge Contraction} (\alg)}\label{algorithm}
\begin{algorithmic}[1]
\State \textbf{input}: $G = (\mathcal{V}, \mathcal{E}),\ T, \phi$
\State \textbf{output}: $\c{G}= (\c{\mathcal{V}}, \c{\mathcal{E}})$
\State $\mathcal{C} \gets \mathcal{E},\ 
\c{G} \gets G$ 
\State $\Phi \gets \sum_{e_{ij} \in \mathcal{E}} \phi_{ij}$, $t \gets 0$, $p_{\text{null}} \gets 0$.
\While{$|\mathcal{C}| > 0$ and $t < T$}
	\State $t \gets t + 1$.
	\State Select each $e_{ij}$ from $\mathcal{C}$ with prob. $p_{ij} = \phi_{ij}/ \Phi$ or fail with prob. $p_{\text{null}}$.	
	\If{$e_{ij} \in \mathcal{C}$}
		\State $\mathcal{C} \gets \mathcal{C} \setminus \mathcal{N}_{ij}$ 
		\State $p_{\text{null}} \gets p_{\text{null}} + \sum_{e_{pq} \in \mathcal{N}_{ij}} p_{pq}$
		\State $\c{G} \gets$ contract($\c{G}$, $e_{ij}$) as in~\eqref{eq:contraction}
	\EndIf
\EndWhile\label{euclidendwhile}
\end{algorithmic}
\end{algorithm}

\alg\,  returns a \emph{maximal} matching when $T$ is sufficiently large. As we will see in the following, it is sufficient to consider $T = O(N)$. The exact number of iterations will be chosen in order to balance the trade-off between the expected dimensionality reduction ratio
$$ r \defequal \E{\frac{N - n}{N}}$$
and the size of the RSS constants. %

\subsection{Analysis of \alg}
\label{subsec:analysis}

The following theorem characterizes the RSS constant of an $\c{L}$ generated by \alg\ and $L$.

\begin{theorem}
Let $\c{L}$ be the coarsened Laplacian produced by \alg\, and further suppose that
$$ \lambda_k \leq 0.5 \, \min_{ e_{ij} \in \mathcal{E}} \left\lbrace \frac{\deg_{i} + \deg_{j}}{2} + w_{ij} \right\rbrace.$$
For any $\epsilon_k \geq 0$, the relation $\lambda_k \leq x_k^\top \p{L} x_k \leq \lambda_k (1 + \epsilon_k)$ holds with probability at least
\begin{align*}
 1 - c_2 \, \frac{1 - e^{-c_1 T / N} }{4 \, \epsilon_k}  \max_{e_{ij} \in \mathcal{E}}  \chi_{ij} \left(\hspace{-0mm} \sum\limits_{e_{pq} \in \mathcal{N}_{ij}} \hspace{-1mm}\frac{w_{pq}}{w_{ij}} + 3 - \frac{4 \lambda_k}{w_{ij}} \right) 
\end{align*}
where $c_1 = N \max_{e_{ij} \in \mathcal{E}} \sum\limits_{e_{pq} \in \mathcal{N}_{ij}} p_{pq}$, $$c_2 = \frac{c_1/N}{1 - e^{-c_1/N}} \quad \text{and} \quad \chi_{ij} = \frac{\phi_{ij}}{ \sum\limits_{e_{pq} \in \mathcal{N}_{ij}} \phi_{pq}}.$$ 
\label{theorem:contraction_similarity_general}
\end{theorem}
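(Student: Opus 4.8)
The plan is to turn the quadratic form $x_k^\top\p{L}x_k$ into an explicit sum over the contracted edges, establish the lower inequality deterministically, and then obtain the upper inequality in probability via Markov's inequality applied to that random sum.

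\emph{Algebraic reduction.} Writing $s_i\in\Rbb^N$ for the $i$-th canonical vector, Property~\ref{property:P} lets me write the projector as $\Pi=I-\Delta$ with $\Delta=\tfrac12\sum_{e_{ij}\in\mathcal{E}_F}(s_i-s_j)(s_i-s_j)^\top$, the sum running over the matched (contracted) edges. Since $\p{L}=\Pi L\Pi$ and $Lx_k=\lambda_k x_k$, expanding gives $x_k^\top\p{L}x_k-\lambda_k=-2\lambda_k\,x_k^\top\Delta x_k+(\Delta x_k)^\top L(\Delta x_k)$. The first term equals $-\lambda_k\sum_{e_{ij}\in\mathcal{E}_F}\delta_{ij}^2$ with $\delta_{ij}=x_k(i)-x_k(j)$. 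For the second I use the defining feature of \alg: contracting $e_{ij}$ deletes the entire neighbourhood $\mathcal{N}_{ij}$ from $\mathcal{C}$, so no two matched edges are joined by an edge of $G$, whence $(s_i-s_j)^\top L(s_p-s_q)=0$ for distinct contracted $e_{ij},e_{pq}$. All cross terms vanish and $(\Delta x_k)^\top L(\Delta x_k)=\tfrac14\sum_{e_{ij}\in\mathcal{E}_F}\delta_{ij}^2(\deg_i+\deg_j+2w_{ij})$. Collecting, $x_k^\top\p{L}x_k-\lambda_k=\sum_{e_{ij}\in\mathcal{E}_F}g_{ij}$ with $g_{ij}=\delta_{ij}^2\big(\tfrac{\deg_i+\deg_j+2w_{ij}}{4}-\lambda_k\big)$.

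\emph{Lower bound and per-edge estimate.} Because $\tfrac{\deg_i+\deg_j+2w_{ij}}{4}=\tfrac12\big(\tfrac{\deg_i+\deg_j}{2}+w_{ij}\big)$, the eigenvalue hypothesis forces every $g_{ij}\ge0$, so $x_k^\top\p{L}x_k\ge\lambda_k$ holds deterministically for any matching. For the upper side I bound each summand: every edge incident to $i$ or $j$ lies in $\mathcal{N}_{ij}$, so $\deg_i+\deg_j\le\sum_{e_{pq}\in\mathcal{N}_{ij}}w_{pq}+w_{ij}$, giving $g_{ij}\le\tfrac{w_{ij}\delta_{ij}^2}{4}\,h_{ij}$ with $h_{ij}=\sum_{e_{pq}\in\mathcal{N}_{ij}}\tfrac{w_{pq}}{w_{ij}}+3-\tfrac{4\lambda_k}{w_{ij}}$, which is exactly the parenthesised factor in the statement.

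\emph{Markov step.} Put $Z_k=x_k^\top\p{L}x_k-\lambda_k=\sum_{e_{ij}\in\mathcal{E}_F}g_{ij}\ge0$, so that $P(Z_k>\epsilon_k\lambda_k)\le\E{Z_k}/(\epsilon_k\lambda_k)$. Taking expectations over the random matching, $\E{Z_k}=\sum_{e_{ij}\in\mathcal{E}}g_{ij}\,P(e_{ij}\in\mathcal{E}_F)\le\tfrac14\sum_{e_{ij}\in\mathcal{E}}w_{ij}\delta_{ij}^2\,h_{ij}\,P(e_{ij}\in\mathcal{E}_F)$. Since $\sum_{e_{ij}\in\mathcal{E}}w_{ij}\delta_{ij}^2=x_k^\top Lx_k=\lambda_k$, the theorem follows once I show $P(e_{ij}\in\mathcal{E}_F)\le\chi_{ij}\,c_2\,(1-e^{-c_1T/N})$: substituting this and pulling out $\max_{e_{ij}}\chi_{ij}h_{ij}$ reproduces the claimed failure probability.

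\emph{The contraction probability (main obstacle).} It remains to bound $P(e_{ij}\in\mathcal{E}_F)$. Here I exploit that \alg\ keeps the potentials fixed and routes deleted mass to the null edge, so surviving edges retain selection probability $p_{pq}=\phi_{pq}/\Phi$ throughout; the $T$ iterations can thus be coupled to i.i.d.\ draws $E_1,\dots,E_T$ from the fixed law $p$ on $\mathcal{E}$, a draw acting only if the chosen edge is still in $\mathcal{C}$. In this representation $e_{ij}$ is contracted exactly when it is drawn while still a candidate, and it stops being a candidate as soon as some edge of $\mathcal{N}_{ij}$ is actually contracted. With $\rho_{ij}=\sum_{e_{pq}\in\mathcal{N}_{ij}}p_{pq}\le c_1/N$ the per-round resolution rate and $\chi_{ij}=p_{ij}/\rho_{ij}$ the conditional winning chance, one has $P(e_{ij}\in\mathcal{E}_F)=p_{ij}\sum_{t=1}^{T}P(e_{ij}\text{ alive at }t)$, and turning the discrete geometric survival decay into the factor $1-e^{-c_1T/N}$ is what produces $c_2=\tfrac{c_1/N}{1-e^{-c_1/N}}$. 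I expect this to be the crux: the candidate-set updates couple the fate of $e_{ij}$ to contractions occurring two hops away, which may delete neighbourhood edges prematurely and so slow $e_{ij}$'s resolution; the delicate part is to control this dependence and still dominate the survival sum by $\tfrac{c_2}{\rho_{ij}}(1-e^{-c_1T/N})$ uniformly over the history.
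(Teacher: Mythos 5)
Your algebraic reduction is correct and is, in substance, the paper's own (Lemma~\ref{lemma:probability_estimate}): the per-edge identity $x_k^\top\p{L}x_k-\lambda_k=\sum_{e_{ij}\in\mathcal{E}_F}\delta_{ij}^2\bigl(\tfrac{\deg_i+\deg_j+2w_{ij}}{4}-\lambda_k\bigr)$, the deterministic lower bound from the eigenvalue hypothesis, the estimate $\deg_i+\deg_j\le\sum_{e_{pq}\in\mathcal{N}_{ij}}w_{pq}+w_{ij}$, and the Markov step via $\sum_{e_{ij}\in\mathcal{E}}w_{ij}\delta_{ij}^2=\lambda_k$ all appear there. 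The only cosmetic difference is that the paper reaches the identity by splitting $y^\top Ly$ with $y=\Pi^\bot x_k$ into frame and boundary terms, whereas you expand $\Pi^\bot$ into rank-one terms and kill the cross terms; both rest on the same induced-matching property enforced by deleting $\mathcal{N}_{ij}$ from the candidate set.

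The gap is exactly where you place it: you reduce the theorem to $\Prob{e_{ij}\in\mathcal{E}_F}\le c_2\,\chi_{ij}\,(1-e^{-c_1 T/N})$ and then stop, declaring the control of the history dependence ``the crux'' without supplying it, so the proposal is incomplete as a proof. In the paper this step is Lemma~\ref{lemma:probability}: it writes $\Prob{e_{ij}\in\mathcal{E}_F}=p_{ij}\sum_{t=0}^{T-1}\Prob{X_{ij}(t)}$ (your survival sum), asserts $\Prob{X_{ij}(t)}=a_{ij}^t$ with $a_{ij}=\prod_{e_{pq}\in\mathcal{N}_{ij}}(1-p_{pq})$, sandwiches $1-P_{ij}\le a_{ij}\le e^{-P_{ij}}$ via the Weierstrass product inequality and $\log(1-p)\le-p$, and finishes with the monotonicity of $f(P)=P\,(1-e^{-TP})/(1-e^{-P})$ evaluated at $\Pmax=c_1/N$, which is where $c_2$ originates. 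Note, however, that the asserted equality $\Prob{X_{ij}(t)}=a_{ij}^t$ is itself an idealization: the true per-round survival probability is $1-\sum_{e_{pq}\in\mathcal{N}_{ij}\cap\mathcal{C}_t}p_{pq}$, and contractions \emph{outside} $\mathcal{N}_{ij}$ can deplete $\mathcal{N}_{ij}\cap\mathcal{C}_t$, inflating both the survival and the contraction probability of $e_{ij}$ --- precisely the mechanism you identified. In fact the uniform-over-history domination you ask for fails in general: on a path of seven vertices with unit potentials, the end edge $e=(v_1,v_2)$ has $p_e=1/6$ and $P_e=1/2$, so the lemma's $T\to\infty$ bound is $\tfrac{1/6}{1-e^{-1/2}}\approx 0.424$, while a direct computation over the embedded jump chain gives a limiting contraction probability of $17/36\approx 0.472$. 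So your proof cannot be completed as announced without either importing the paper's factorization step (accepting the same idealization) or genuinely repairing the dependence --- something the paper's own argument does not do either.
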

\vspace{-4mm}
The theorem reveals that the dependency of $\epsilon_k$ to some extremal properties implied by the potential function $\phi$ and the graph structure. It is noteworthy that 
\begin{align}
	c_1 = O(1) \quad \text{implies} \quad \lim_{N \rightarrow \infty} c_2 = 1.
\end{align}
These asymptotics can be taken at face value even for finite size problems: coarsening typically becomes computationally relevant for large $N$ (typically $N > 10^3$), for which $c_2$ has effectively converged to its limit.

The assumption that $c_1$ is independent of $N$ can be satisfied either by assuming that $G$ is a \emph{bounded degree graph}, such that $|\mathcal{N}_{ij}| \ll N$ for every $e_{ij} \in \mathcal{E}$, or by choosing potential functions $\phi_{ij}$ that are inversely proportional to $|\mathcal{N}_{ij}|$.

We can also incorporate the expected reduction ratio $r$ in the bound, by noting that  
\begin{align} 
r N = \sum_{e_{ij} \in \mathcal{E}} \Prob{e_{ij} \in \mathcal{E}_F} 
&\geq \sum_{e_{ij} \in \mathcal{E}} p_{ij}  \frac{1- e^{-T P_{ij}} }{P_{ij}} \notag \\
&\hspace{-1cm}\geq \frac{1- e^{-T \Pmax} }{\Pmax } = \frac{1- e^{-c_1 T/N} }{c_1 / N },
\end{align}
(see proof of Theorem~\ref{theorem:contraction_similarity_general} for definitions of $P_{ij}$ and $\Pmax$) implying  
\begin{align}
	1- e^{-c_1 T/N} \leq r c_1, 
	\label{eq:ratio_inequality}
\end{align}
as well as that $T = \frac{N}{c_1} \log\left( \frac{1}{1 - r c_1 } \right) $ iterations suffice to achieve any $r < 1 /c_1$. Nevertheless, this latter estimate is more pessimistic than the one presented in Theorem~\ref{theorem:contraction_similarity_general}.

\paragraph{The norm $\|\Pi^\bot x_k\|_2^2$.} For all $k$, one has
\begin{align*}
	\Prob{ \|\Pi^\bot x_k\|_2^2 \geq \epsilon \, \lambda_k } \leq c_2 \frac{1 - e^{-c_1 T / N}}{2 \, \epsilon} \, \max_{e_{ij} \in \mathcal{E}} \frac{\chi_{ij}}{w_{ij}},
\end{align*}
with constants defined as before (the derivation is not included as it resembles the one employed in the proof of Theorem~\ref{theorem:contraction_similarity_general}). Thus, $\|\Pi^\bot x_k\|_2^2$ depends on the ratio $r$ (through~\eqref{eq:ratio_inequality}) and is smaller for small $k$ (due to $\lambda_k$). This is reasonable: by definition, eigenvectors corresponding to small eigenvalues are smooth functions on $G$; averaging some of their entries locally on the graph is unlikely to alter their values significantly.

\begin{figure*}[t]
  \centering
  \vspace{-1mm}
  \subfloat[Bunny (point cloud)]{\includegraphics[width=0.250\textwidth,trim=3.2cm 3.0cm 3cm 3.5cm, clip]{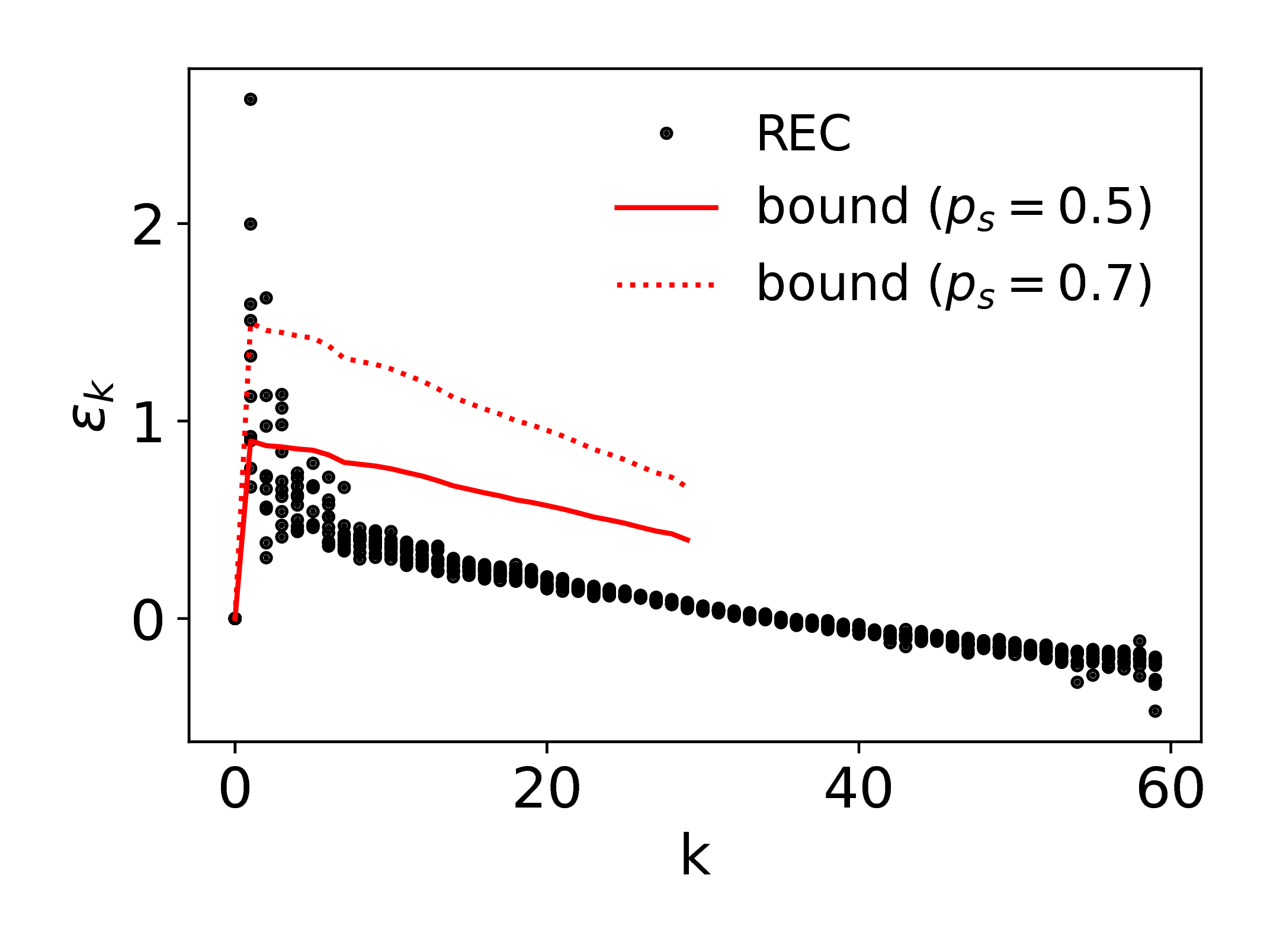}\label{fig:bunny}}
  \hfill
  \subfloat[Swiss roll (manifold)]{\includegraphics[width=0.250\textwidth,trim=3.2cm 3.0cm 3cm 3.5cm, clip]{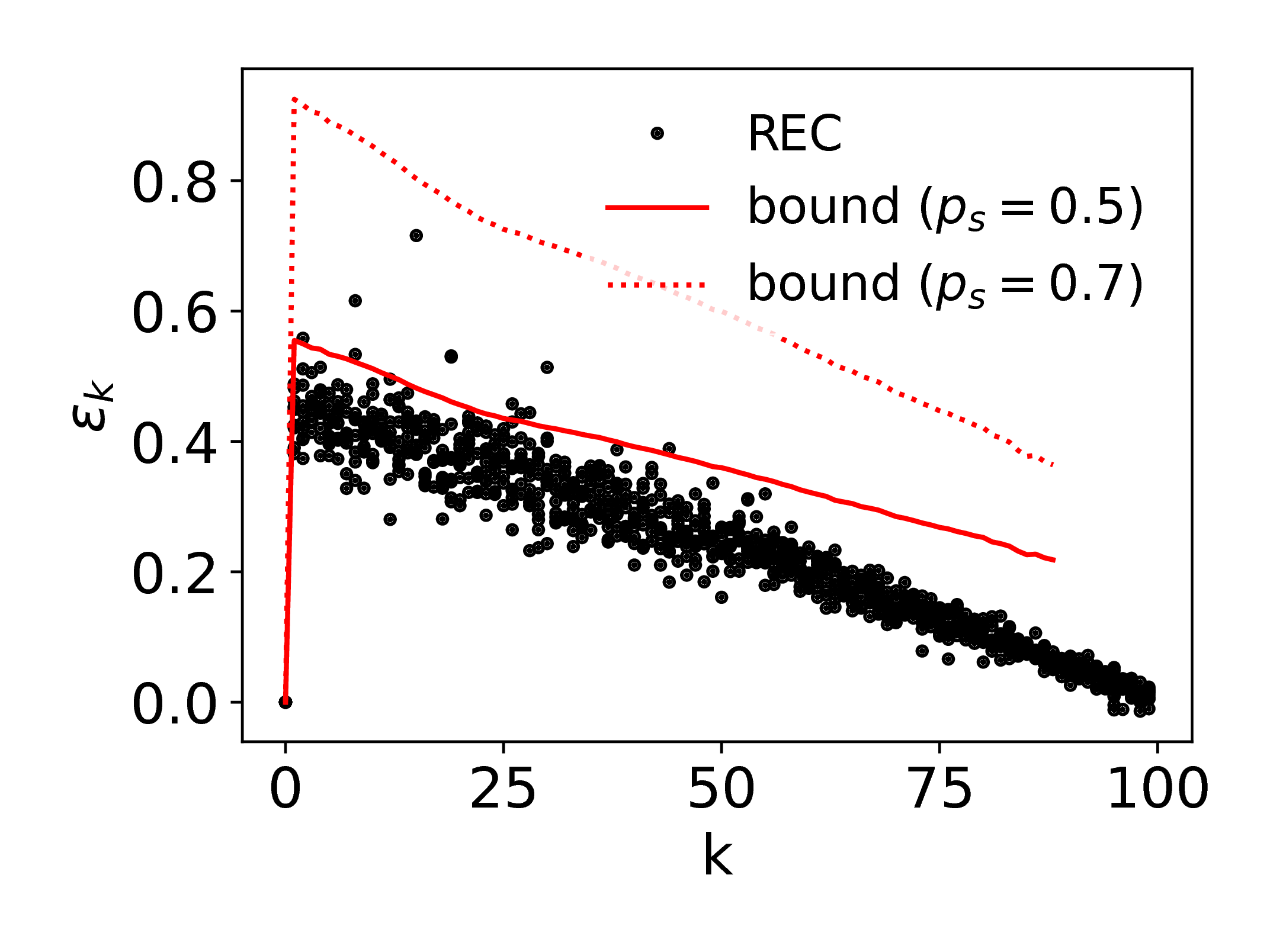}\label{fig:swissroll}}
  \hfill
  \subfloat[Yeast (protein network)]{\includegraphics[width=0.250\textwidth,trim=3.2cm 3.0cm 3cm 3.5cm, clip]{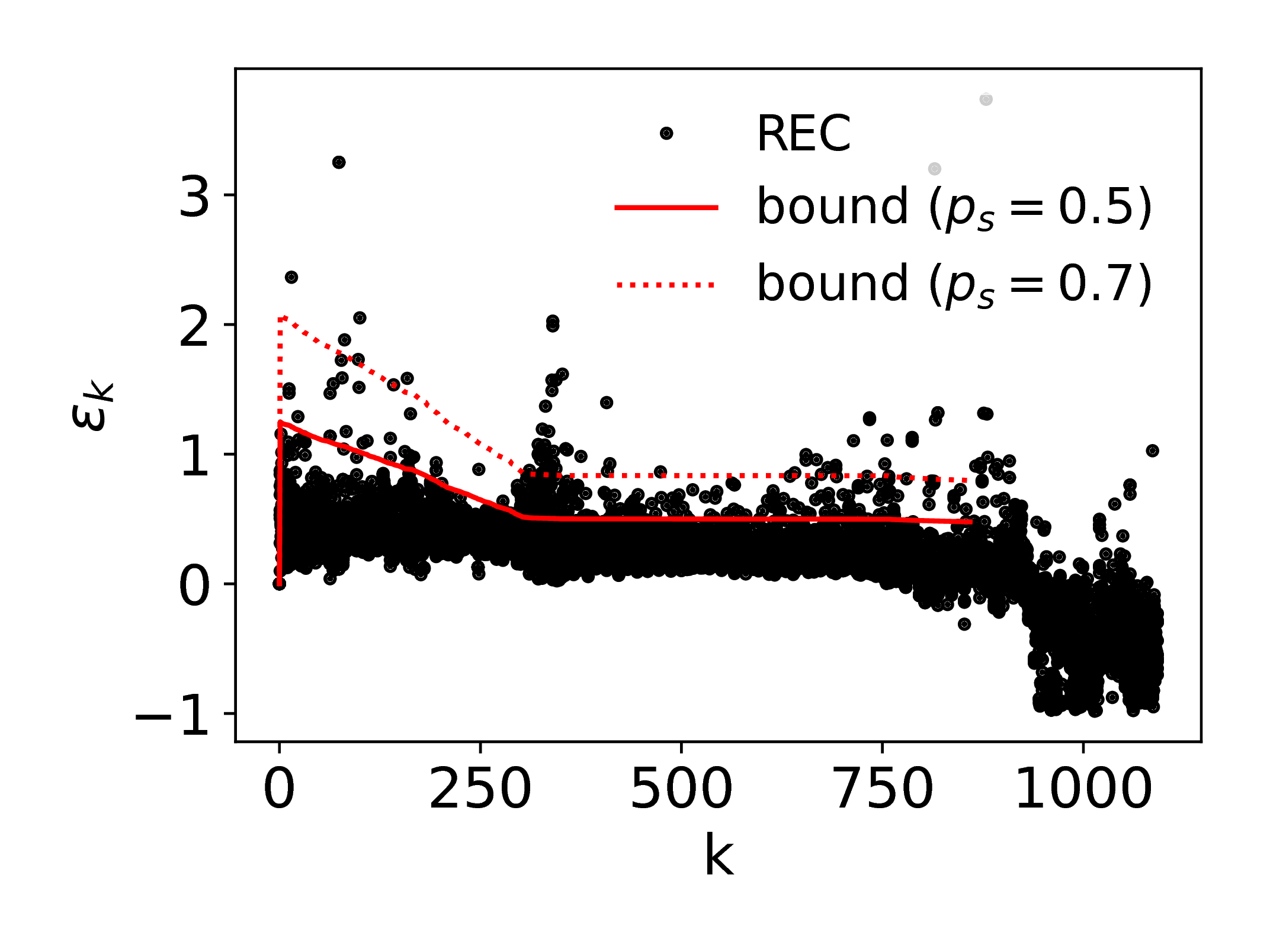}\label{fig:yeast}}
  \hfill
  \subfloat[Regular graph]{\includegraphics[width=0.250\textwidth,trim=3.2cm 3.0cm 3cm 3.5cm, clip]{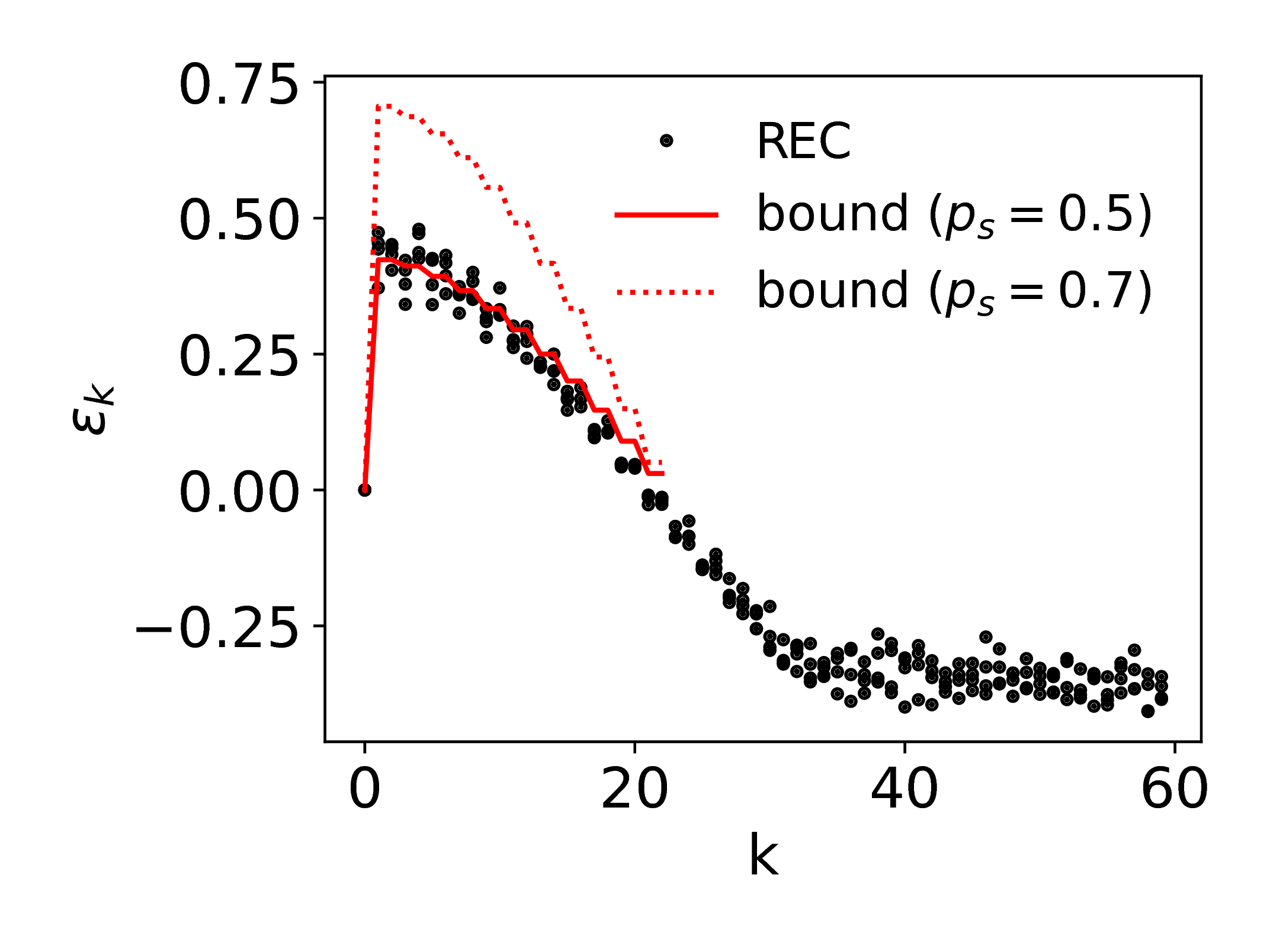}\label{fig:yeast}}  
 \vspace{2mm}
\caption{The proposed bounds follow the behavior of the RSS constants $\epsilon_k$, especially for regular graphs or graphs with small degree variance. The two red lines plot the bounds of Theorem~\ref{theorem:contraction_similarity_general} for a success probability of $p_s = 0.5$ and $p_s = 0.7$. \label{fig:RSS}\vspace{-4mm}}
\end{figure*}

\subsection{The heavy-edge potential function} 

Let us examine how the achieved results behave for a specific potential function. Setting $\phi_{ij} = w_{ij}$ is a simple way to give preference to heavy frames---indeed, heavy-edge matchings have been utilized as a heuristic for coarsening (e.g., in combination with graph partitioning~\cite{karypis1998multilevelk}). It is perhaps interesting to note this particular potential function can be derived naturally from Theorem~\ref{theorem:contraction_similarity_general} if we require that  
\begin{align}
	\chi_{ij} \, \frac{ \sum_{e_{pq} \in \mathcal{N}_{ij}} w_{pq} }{w_{ij}}  = 1 \quad \text{for all} \ e_{ij}.
\end{align}
It will be useful to denote respectively by $\rhomin$ and $\rhomax$ the minimum and maximum of expression $({d_i + d_j - w_{ij}})/{2 \davg}$ over all $e_{ij}$, with $\davg$ being the average degree. It is straightforward to calculate that in this case
$$ 
\quad c_1 = 2 \, \left( \frac{d_i + d_j - w_{ij}}{\davg}\right) = 4 \rhomax. $$ 
Therefore, $c_1 = O(1)$ for all graphs in which $\Omega(1) = \rhomin \leq \rhomax = O(1)$, and given sufficiently large $N$ and some manipulation the probability estimate of Theorem~\ref{theorem:contraction_similarity_general}  reduces to    
\begin{align}
1 - \frac{1 - e^{-4\rhomax T/N}}{ 4 \, \epsilon_k } \, \left( 1 + \frac{1.5 - 2 \lambda_k}{ \davg \, \rhomin }\right).   
\label{col:contraction_similarity_heavy}
\end{align}
In addition, 
$
	\Prob{ \|\Pi^\bot x_k\|_2^2 >  \epsilon \, \lambda_k } \leq \frac{1 - e^{-4\rhomax T/N}}{2 \, \epsilon \, \rhomin \, \davg}.
$

The heavy-edge potential function is therefore more efficient for graphs with small degree variations. Such graphs are especially common in machine learning, where often the connecticity of each vertex is explicitly constructed such that all degrees are close to some target value (e.g., using a $k$-nearest neighbor graph construction~\cite{muja2014scalable}).

As a proof of concept, Figure~\ref{fig:RSS} compares the actual constants $\epsilon_k$ with the bound of Theorem~\ref{theorem:contraction_similarity_general} when utilizing \alg\ with a heavy-edge potential to coarsen the following benchmark graphs: (\emph{i}) a point cloud representing a bunny obtained by re-sampling the Stanford bunny 3D-mesh~\cite{turk1994zippered} and applying a k-nn construction ($N = 1000, r = 0.4, k=30$), (\emph{ii}) a k-nn similarity graph capturing the geometry of a 2D manifold usually referred to as Swiss roll ($N = 1000, r=0.4, k = 10$), (\emph{iii}) A network describing the interaction of yeast proteins~\cite{nr-aaai15} ($N = 1458, r = 0.25, \davg = 2, d_\text{max} = 56$), and (\emph{iv}) a $d$-regular graph ($N = 400, r = 0.4, d = 20$). As predicted by our bounds, $\epsilon_k$ decrease with $k$ (the decrease is close to linear in $\lambda_k$) and with the variance of the degree distribution. The heavy-tailed yeast network and the regular graph constitute two extreme examples, with the latter featuring much smaller constants.

\subsection{Regular graphs} 

For regular graphs, \eqref{eq:ratio_inequality} becomes asymptotically tight leading to the following Corollary:
\begin{corollary}
If $G$ is a regular graph with combinatorial degree $d$ and equal edge weights $w_{ij} = w$, then for any $k$ such that $ \lambda_k \leq (d + 1)/2$ and for sufficiently large $N$, the relation $\lambda_k \leq x_k^\top \p{L} x_k \leq \lambda_k (1 + \epsilon_k)$ holds with probability at least
\begin{align}
	\geq 1 - r \, \frac{1 - (2d)^{-1}}{ \epsilon_k} \left(1 + \frac{1.5 - \lambda_k}{ d-0.5}\right) 
	\overset{d \gg 1}{\approx} 1 - \frac{r}{ \epsilon_k}.
\end{align}
Furthermore, inequality $\norm{\Pi^\top x_k}_2^2 \geq \epsilon r \lambda_2$ holds for all $k$ with probability at most $2 / (d \epsilon)$ and $T = \frac{N}{2(2 - 1/d)} \log\left(\frac{1}{1 - 2(2 - 1/d) r} \right)$ iterations of \alg\, suffice in expectation to achieve reduction $r$.
\label{cor:regular}
\end{corollary}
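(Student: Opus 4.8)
The plan is to specialize the general machinery of Section~\ref{subsec:analysis} to the regular case, where the perfect symmetry of the graph collapses every edge-wise extremum into a single value and lets \eqref{eq:ratio_inequality} be used essentially as an equality. First I would record the relevant parameters. For a $d$-regular graph with $w_{ij} = w$ the weighted degrees are $\deg_i = dw$ and the average degree is $\davg = dw$, so the ratio $(\deg_i + \deg_j - w_{ij})/(2\davg)$ is identical for every edge; hence $\rhomin = \rhomax = 1 - (2d)^{-1}$ and, by the heavy-edge computation, $c_1 = 4\rhomax = 2(2 - 1/d)$. Taking $w = 1$ (consistent with the hypothesis $\lambda_k \leq (d+1)/2$, which is precisely the specialization of the assumption of Theorem~\ref{theorem:contraction_similarity_general}) also gives $\davg\,\rhomin = d - 0.5$. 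The regime "sufficiently large $N$" is exactly the one in which $c_2 \to 1$, so I may work directly from the already-simplified heavy-edge estimates.

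For the first inequality I would start from the heavy-edge probability estimate \eqref{col:contraction_similarity_heavy} and substitute the above values, turning the bracketed factor into $1 + (1.5 - 2\lambda_k)/(d - 0.5)$. The crucial simplification is to handle the exponential term: since \eqref{eq:ratio_inequality} reads $1 - e^{-c_1 T/N} \leq r c_1$ and is asymptotically tight for regular graphs, I substitute $1 - e^{-4\rhomax T/N} \leq r c_1 = 4 r (1 - (2d)^{-1})$. After cancelling the factor $4$ this yields a failure probability of at most $r\,(1 - (2d)^{-1})\,\epsilon_k^{-1}\,(1 + (1.5 - 2\lambda_k)/(d - 0.5))$; bounding $1.5 - 2\lambda_k \leq 1.5 - \lambda_k$ (valid since $\lambda_k \geq 0$) gives the stated form, and letting $d \to \infty$ sends $1 - (2d)^{-1} \to 1$ and the inner fraction to $0$, recovering the $\approx 1 - r/\epsilon_k$ approximation.

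The remaining two claims are short. For the projection norm I take the heavy-edge bound $\Prob{\norm{\Pi^\bot x_k}_2^2 > \epsilon\lambda_k} \leq (1 - e^{-4\rhomax T/N})/(2\epsilon\rhomin\davg)$ from the same subsection; substituting $\rhomin\davg = d - 0.5$ together with $1 - e^{-4\rhomax T/N} \leq 4r(1 - (2d)^{-1})$ and simplifying (the factor $2d - 1$ from $1 - (2d)^{-1}$ cancels against $2(d - 0.5)$) gives the clean bound $2r/(d\epsilon)$; rescaling the threshold $\epsilon \mapsto \epsilon r$ then produces exactly $\Prob{\norm{\Pi^\bot x_k}_2^2 \geq \epsilon r\lambda_k} \leq 2/(d\epsilon)$. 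For the iteration count I simply insert $c_1 = 2(2 - 1/d)$ into the general estimate $T = (N/c_1)\log(1/(1 - rc_1))$ derived right after \eqref{eq:ratio_inequality}, which is valid for any $r < 1/c_1$; tightness of \eqref{eq:ratio_inequality} is what upgrades "suffices" into "suffices in expectation" with the bound met essentially at equality.

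The main obstacle is justifying the asymptotic tightness of \eqref{eq:ratio_inequality} for regular graphs, on which both the first and the second bound rely. This requires returning to the proof of Theorem~\ref{theorem:contraction_similarity_general} and verifying that, under the symmetry of a regular graph with uniform weights and a uniform (equivalently, heavy-edge) potential, the per-edge quantities $P_{ij}$ are all equal, so that $\Pmin = \Pmax = c_1/N$ and every $\geq$ used in deriving \eqref{eq:ratio_inequality}---each of which replaced an individual $P_{ij}$ by its extreme value---becomes an equality in the large-$N$ limit. Everything else reduces to the routine substitutions and cancellations outlined above.
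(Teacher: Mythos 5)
Your proposal is correct and follows essentially the same route the paper takes, which states Corollary~\ref{cor:regular} as the direct specialization of the heavy-edge estimates of Section~\ref{sec:algorithm} ($\rhomin = \rhomax = 1 - (2d)^{-1}$, $c_1 = 4\rhomax = 2(2-1/d)$, $\davg\,\rhomin = d - 0.5$ for $w = 1$) combined with the asymptotically tight inequality~\eqref{eq:ratio_inequality}, whose tightness you justify by exactly the right observation that all $P_{ij}$ coincide on a regular graph under the (uniform) heavy-edge potential. Your exact substitution in fact yields the sharper factor $1 + (1.5 - 2\lambda_k)/(d - 0.5)$, and your weakening $1.5 - 2\lambda_k \leq 1.5 - \lambda_k$ legitimately recovers the stated (slightly looser, possibly typographical) form, with the $2/(d\epsilon)$ bound and the iteration count following by the same routine substitutions the paper intends.
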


An other way to read Corollary~\ref{cor:regular} is that, for a sufficiently dense regular graph, there exists\footnote{The existence is implied by the probabilistic method.} an edge contraction for which $\c{L}$ satisfies the RSS property with constants bounded by $r$.

\section{The spectrum of the coarsened Laplacian}
\label{sec:spectrum}

This section links the RSS property with spectrum preservation. Our results demonstrate that the distance between the spectrum of a coarsened Laplacian and of the combinatorial Laplacian it approximates is directly a function the RSS constant between the two matrices. This relation also extends to eigenspaces.

\subsection{Basic facts about the spectrum}
\label{subsec:basic}

Before delving into our main results, let us first consider the spectrum of a coarsened Laplacian which does not (necessarily) meet the RSS property.
 
W.l.o.g., let $G$ be connected and sort its eigenvalues as $0 = \lambda_1 < \lambda_2 \leq \ldots \leq \lambda_N.$ Similarly, let $\p{\lambda}_k$ be the $k$-th largest eigenvalue of the coarsened Laplacian $\c{L}$ and name $\p{x}_k$ the associated eigenvector. 
As the following theorem shows, there is a direct relation between the eigenvalues $\p{\lambda}$ and $\lambda$.
\begin{theorem}
Inequality $ \lambda_k \leq \p{\lambda}_k $ holds for all $k \leq n$.
\label{theorem:interlacing}
\end{theorem}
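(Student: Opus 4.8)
The plan is to read $\c{L} = CLC^\top$ as an orthogonal compression of $L$ and to invoke the Courant--Fischer min--max characterization of eigenvalues; the statement is then an instance of the Poincar\'e separation theorem (the compression form of Cauchy interlacing). The structural fact that makes this work is that the rows of $C$ are orthonormal, i.e.
\begin{align}
C C^\top = I_n. \notag
\end{align}
This is immediate from the block-diagonal form $C = \blkdiag{c_1^\top, \ldots, c_n^\top}$: distinct rows have disjoint supports, so the off-diagonal entries of $CC^\top$ vanish, while the diagonal entries equal $\norm{c_i}^2 = 1$. Note this is strictly stronger than Property~\ref{property:P}, which only concerns $C^\top C = \Pi$. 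In particular $C^\top$ has full column rank and is therefore injective, a fact needed below for dimension bookkeeping.

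Next I would relate the two Rayleigh quotients. For any $y \in \Rbb^n$, put $x = C^\top y$; then $C C^\top = I_n$ gives $x^\top x = y^\top y$ and $x^\top L x = y^\top \c{L} y$, so the Rayleigh quotient of $\c{L}$ at $y$ equals that of $L$ at $C^\top y$. Sorting both spectra increasingly, Courant--Fischer yields
\begin{align}
\p{\lambda}_k
= \min_{\substack{\mathcal{S} \subseteq \Rbb^n \\ \dim \mathcal{S} = k}} \ \max_{0 \neq y \in \mathcal{S}} \frac{y^\top \c{L} y}{y^\top y}
= \min_{\substack{\mathcal{S} \subseteq \Rbb^n \\ \dim \mathcal{S} = k}} \ \max_{0 \neq y \in \mathcal{S}} \frac{(C^\top y)^\top L (C^\top y)}{(C^\top y)^\top (C^\top y)}. \notag
\end{align}

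Since $C^\top$ is injective, as $\mathcal{S}$ ranges over the $k$-dimensional subspaces of $\Rbb^n$ its image $C^\top \mathcal{S}$ ranges exactly over the $k$-dimensional subspaces of $\Rbb^N$ contained in $\text{range}(C^\top)$. Thus the right-hand side above minimizes the very objective appearing in the Courant--Fischer formula for $\lambda_k$, but over a subset of all $k$-dimensional subspaces. Restricting the feasible set of a minimization can only increase its value, so
\begin{align}
\lambda_k
= \min_{\substack{\mathcal{T} \subseteq \Rbb^N \\ \dim \mathcal{T} = k}} \ \max_{0 \neq x \in \mathcal{T}} \frac{x^\top L x}{x^\top x}
\ \leq \
\min_{\substack{\mathcal{T} \subseteq \text{range}(C^\top) \\ \dim \mathcal{T} = k}} \ \max_{0 \neq x \in \mathcal{T}} \frac{x^\top L x}{x^\top x}
= \p{\lambda}_k, \notag
\end{align}
which is the claim for every $k \leq n$.

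I do not anticipate a substantive obstacle; the argument is short once $CC^\top = I_n$ is in hand. The only points requiring care are bookkeeping: using the orthonormality of the rows of $C$ (not merely the projection identity for $\Pi$), checking that injectivity of $C^\top$ preserves dimension so the correspondence between feasible subspaces is exact, and fixing the sorting convention so that the ``$k$-th'' eigenvalue of $\c{L}$ is indexed increasingly, consistently with the $\lambda_k$.
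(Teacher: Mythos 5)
Your proof is correct and takes essentially the same route as the paper's: both apply the Courant--Fischer min--max characterization and observe that $\p{\lambda}_k$ solves the same optimization as $\lambda_k$ but with the candidate $k$-dimensional subspaces restricted to lie in $\mathrm{range}(C^\top) = \mathrm{range}(\Pi)$, so the minimum can only increase. Your explicit use of $CC^\top = I_n$ to identify the two Rayleigh quotients is simply a cleaner bookkeeping of what the paper accomplishes via the constraint $x = \Pi x$ and Property~\ref{property:P}.
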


We remark the similarity of the above to a known result in spectral graph theory~\cite{chung1997spectral} (Lemma 1.15) assering that, if $\nu_k$ is the $k$-th eigenvalue of the normalized Laplacian of $G$ and $\p{\nu}_k$ is the $k$-th eigenvalue of the normalized Laplacian of a graph $\c{G}$ obtained by edge contraction, then $\nu_k \leq \p{\nu}_k$  for all $k = 1, 2, \ldots, n$.
Despite this similarity however, Theorem~\ref{theorem:interlacing} deals with the eigenvalues of the combinatorial Laplacian matrix and its coarsened counterpart $\c{L} = C L C^\top$.

We also notice that, when $c$ is chosen to be constant over each connected component of $G_F$ (as we assume in this work) the nullspace of $\c{L}$ spans the downsampled constant vector implying that 
\begin{align}
C^\top \p{x}_1 = x_1 \quad \text{and} \quad  0=\p{\lambda}_1  < \p{\lambda}_2.
\label{eq:first_eigenvalue}
\end{align}
The above relations constitute the main reason why we utilize constant coarsening weights in our construction.

\subsection{From the RSS property to spectrum preservation}

For eigenvalues, the RSS property implies an upper bound:
\begin{theorem}
	If $\c{L}$ satisfies the RSS property, then  
	\begin{align}
		\lambda_k \leq \p{\lambda}_k \leq \max\left\lbrace \p{\lambda}_{k-1},\ \frac{(1 + \epsilon_k)}{ \sum_{i \geq k} (\p{x}_i^\top C x_k)^2} \, \lambda_k \right\rbrace
	\end{align}
	\label{theorem:eigenvalue}
for all $k \leq K$, where $\epsilon_k$ is the $k$-th RSS constant.
\end{theorem}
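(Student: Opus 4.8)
The lower bound $\lambda_k \le \p{\lambda}_k$ is already furnished by Theorem~\ref{theorem:interlacing}, so the whole task reduces to the upper bound. The plan is to convert the RSS hypothesis, which constrains only the scalar $x_k^\top \p{L} x_k$, into a statement about a Rayleigh quotient of $\c{L}$ and then feed it into the Courant--Fischer min-max principle. The first fact I would record is the identity $\p{L} = \Pi L \Pi = C^\top \c{L} C$, which follows at once from $\Pi = C^\top C$ and $\c{L} = C L C^\top$. Consequently $x_k^\top \p{L} x_k = (C x_k)^\top \c{L} (C x_k)$, so the RSS upper bound reads $(C x_k)^\top \c{L} (C x_k) \le (1+\epsilon_k)\lambda_k$. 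Expanding $C x_k$ in the orthonormal eigenbasis $\{\p{x}_i\}$ of $\c{L}$ then gives the clean formula $(C x_k)^\top \c{L} (C x_k) = \sum_{i=1}^n \p{\lambda}_i (\p{x}_i^\top C x_k)^2$.

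Next I would isolate the ``high-frequency'' part of $C x_k$, namely its projection $w = \sum_{i \ge k} (\p{x}_i^\top C x_k)\,\p{x}_i$ onto the span of the top $n-k+1$ eigenvectors of $\c{L}$, and use the $k$-dimensional test space $S = \spanspace{\p{x}_1,\dots,\p{x}_{k-1}, w}$ in the characterization $\p{\lambda}_k = \min_{\dim S' = k}\ \max_{0\neq y \in S'} y^\top \c{L} y / \norm{y}^2$. The point of this particular $S$ is that its two pieces live in complementary invariant subspaces of $\c{L}$, so all cross terms cancel, and for any $y = u + \gamma w \in S$ (with $u \in \spanspace{\p{x}_1,\dots,\p{x}_{k-1}}$) one obtains $y^\top \c{L} y / \norm{y}^2 \le \max\{\p{\lambda}_{k-1},\ w^\top \c{L} w / \norm{w}^2\}$ via the mediant inequality $\frac{a+b}{c+d}\le\max\{\frac ac,\frac bd\}$. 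Since $\norm{w}^2 = \sum_{i\ge k}(\p{x}_i^\top C x_k)^2$ and $w^\top \c{L} w \le (C x_k)^\top \c{L} (C x_k) \le (1+\epsilon_k)\lambda_k$ (the first step merely drops the nonnegative low-frequency contributions), the second argument of the maximum is at most $(1+\epsilon_k)\lambda_k / \sum_{i\ge k}(\p{x}_i^\top C x_k)^2$, and the claimed bound follows.

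The step I expect to require the most care is the construction of the test space and the verification that the cross terms genuinely vanish, since this is precisely what allows the single-vector RSS bound (which controls nothing about $x_i^\top \p{L} x_j$ for $i\neq j$) to govern a $k$-dimensional min-max. The appearance of the maximum with $\p{\lambda}_{k-1}$ is a direct byproduct of folding the low-frequency eigenvectors into $S$; the degenerate case $w = 0$, i.e.\ $\sum_{i\ge k}(\p{x}_i^\top C x_k)^2 = 0$, is harmless, because the claimed right-hand side is then $+\infty$ and the inequality holds vacuously. I would also remark that discarding the low-frequency eigenvectors and arguing on $w$ alone yields the slightly stronger estimate $\p{\lambda}_k \le (1+\epsilon_k)\lambda_k / \sum_{i\ge k}(\p{x}_i^\top C x_k)^2$, from which the stated bound follows a fortiori.
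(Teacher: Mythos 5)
Your proposal is correct and follows essentially the same route as the paper's proof: the paper applies the min-max principle in $\Rbb^N$ under the constraint $x = \Pi x$ with the test matrix $Z$ whose last column is $z = \Pi(I - Y_{k-1}Y_{k-1}^\top)x_k$, and since $z = C^\top w$ for your vector $w = \sum_{i \geq k} (\p{x}_i^\top C x_k)\, \p{x}_i$, your coarse-domain test space $\spanspace{\p{x}_1,\dots,\p{x}_{k-1},w}$ is exactly the paper's construction seen through the lifting, with the vanishing cross terms, the computation of $w^\top \c{L} w$ and $\norm{w}^2$, and the final application of the RSS bound all matching (your explicit mediant inequality even tightens a step the paper handles somewhat loosely via the cases $a(k-1)=1$ and $b=1$). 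Your closing remark is also valid and worth noting: because $w$ lies in the invariant subspace spanned by $\p{x}_k,\dots,\p{x}_n$, one has $w^\top \c{L} w \geq \p{\lambda}_k \norm{w}^2$ directly, so when $w \neq 0$ the stronger bound $\p{\lambda}_k \leq (1+\epsilon_k)\lambda_k / \sum_{i\geq k}(\p{x}_i^\top C x_k)^2$ follows without the $k$-dimensional test space at all, rendering the $\p{\lambda}_{k-1}$ branch of the maximum redundant.
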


The term $\sum_{i \geq k} (\p{x}_i^\top C x_k)^2$ depends on the orientation of the eigenvectors of $\c{L}$ with respect to those of $L$. We expect:
\begin{align*}
	\lambda_k 
	\leq \p{\lambda}_k \leq \frac{(1 + \epsilon_k)}{ \sum_{i \geq k} (\p{x}_i^\top C x_k)^2} \, \lambda_k 
	\approx \frac{(1 + \epsilon_k)}{ \|\Pi x_k \|_2^2} \, \lambda_k. 
\end{align*}
Indeed, for $\lambda_2$ the above becomes an equality as $$\sum_{i \geq 2} (\p{x}_i^\top C x_2)^2 = \| \Pi x_2 \| -  (\p{x}_1^\top C x_2)^2 = \| \Pi x_2 \|,$$ 
where the last equality follows from~\eqref{eq:first_eigenvalue}. In this case, the above results combined with the analysis presented in Section~\ref{subsec:analysis} imply the following corollary: 
\begin{corollary}
Consider a bounded degree graph with $\lambda_2 \leq 0.5 \min_{e_{ij} \in \mathcal{E}} \left\lbrace \frac{d_i + d_j}{2} + w_{ij}\right\rbrace $  and suppose that it is coarsened by \alg\, using a heavy-edge potential. For any feasible expected dimensionality reduction ratio $r$, sufficiently large $N$ and any $\epsilon > 0$
$$ \p{\lambda}_2 \leq \frac{1 + r \epsilon}{1 - \lambda_2 \, r \epsilon} \, \lambda_2,$$
with probability at least  $1 - \frac{c_3}{4 \, \epsilon} \left( 1 + \frac{1.5 w_{\text{max}} + 2 (1-\lambda_2)}{\davg \, \rhomin} \right)$ where $c_3 = r \, (1 - e^{-4 \rhomax T/N})$. For a $d$-regular graph this probability is at least $1 - \frac{1}{\epsilon}(1 + \frac{3 - \lambda_2}{d}).$
\end{corollary}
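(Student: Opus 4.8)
The plan is to combine the deterministic eigenvalue bound of Theorem~\ref{theorem:eigenvalue} at $k=2$ with the probabilistic estimates of Section~\ref{subsec:analysis}. First I would specialize Theorem~\ref{theorem:eigenvalue}: since $\p{\lambda}_1 = 0$ by~\eqref{eq:first_eigenvalue} while $\lambda_2 > 0$, the maximum on the right-hand side is attained by its second argument, and using the identity $\sum_{i \geq 2}(\p{x}_i^\top C x_2)^2 = \norm{\Pi x_2}_2^2 = 1 - \norm{\Pi^\bot x_2}_2^2$ (which follows from $\p{x}_1^\top C x_2 = x_1^\top x_2 = 0$, Parseval in the eigenbasis of $\c{L}$, and $\norm{x_2}_2 = 1$) this yields the purely deterministic bound $\p{\lambda}_2 \leq \frac{1 + \epsilon_2}{1 - \norm{\Pi^\bot x_2}_2^2}\,\lambda_2$.

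Next I would reduce the claim to two scalar events. On the event $A = \{\, x_2^\top \p{L} x_2 \leq (1 + r\epsilon)\,\lambda_2 \,\}$ the numerator above is at most $1 + r\epsilon$, and on the event $B = \{\, \norm{\Pi^\bot x_2}_2^2 \leq r\epsilon\,\lambda_2 \,\}$ the denominator is at least $1 - r\epsilon\,\lambda_2 > 0$ (feasibility of $r$ guaranteeing positivity). Hence on $A \cap B$ the target inequality $\p{\lambda}_2 \leq \frac{1 + r\epsilon}{1 - r\epsilon\,\lambda_2}\,\lambda_2$ holds, so it remains to lower bound $\Prob{A \cap B}$ by a union bound over the two failure events.

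The two failure probabilities are supplied by the heavy-edge analysis. For $\Prob{A^c}$ I would invoke the heavy-edge RSS estimate~\eqref{col:contraction_similarity_heavy} with RSS constant $\epsilon_k = r\epsilon$, and for $\Prob{B^c}$ the companion bound $\Prob{\norm{\Pi^\bot x_2}_2^2 > \eta\,\lambda_2} \leq (1 - e^{-4\rhomax T/N})/(2\eta\,\rhomin\,\davg)$ with $\eta = r\epsilon$. Adding the two contributions merges their parenthetical terms: the RSS term gives $1 + (1.5\,w_{\text{max}} - 2\lambda_2)/(\davg\,\rhomin)$ (after bounding each $w_{ij} \leq w_{\text{max}}$) and the norm term gives $2/(\rhomin\,\davg)$, whose sum is exactly $1 + (1.5\,w_{\text{max}} + 2(1 - \lambda_2))/(\davg\,\rhomin)$. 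Collecting the common factor, writing $1 - e^{-4\rhomax T/N}$ through the definition of $c_3$, and invoking~\eqref{eq:ratio_inequality} to bring in the reduction ratio then produces the stated probability.

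For the $d$-regular specialization I would substitute $\davg = d$, $w_{ij} = w$ and $\rhomin = \rhomax = 1 - (2d)^{-1}$, and use the asymptotic tightness of~\eqref{eq:ratio_inequality} noted in Section~\ref{subsec:analysis} to replace $1 - e^{-4\rhomax T/N}$ by $4 r\rhomax$; the explicit $r$ this introduces cancels the $r$ carried by the thresholds $\epsilon_2 = \eta = r\epsilon$, leaving the $r$-free bound $1 - \frac{1}{\epsilon}\big(1 + (3 - \lambda_2)/d\big)$ after simplifying for large $d$. The main obstacle is precisely the bookkeeping of this factor $r$: it enters simultaneously through both thresholds and through the ratio inequality, and one must track its cancellation (and the exact assembly of $c_3$) with care so as to land on the stated constants rather than a bound looser by a power of $r$. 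Everything else is routine algebra once the deterministic reduction and the union bound are in place.
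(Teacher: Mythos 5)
Your proposal is correct and follows essentially the same route as the paper: the paper's entire proof is the one-line remark that the corollary follows by a union bound over the events $\{\|\Pi^\bot x_2\|_2^2 > \lambda_2\, r\epsilon\}$ and $\{x_2^\top \p{L} x_2 > (1+r\epsilon)\lambda_2\}$ combined with Theorem~\ref{theorem:eigenvalue} at $k=2$ (using $\p{x}_1^\top C x_2 = 0$ from~\eqref{eq:first_eigenvalue}) and the heavy-edge estimates of Section~\ref{sec:algorithm}, which is exactly what you carry out, in more detail than the paper supplies, including the correct merging of the parentheticals into $1 + \bigl(1.5\,w_{\text{max}} + 2(1-\lambda_2)\bigr)/(\davg\,\rhomin)$. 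Your flagged concern about the bookkeeping of $r$ is well placed: the union bound actually yields the prefactor $(1-e^{-4\rhomax T/N})/(4r\epsilon)$, which equals the stated $c_3/(4\epsilon)$ only under the reading $c_3 = (1-e^{-4\rhomax T/N})/r$, so the paper's printed definition $c_3 = r\,(1-e^{-4\rhomax T/N})$ appears to be a typo (the $r$-free $d$-regular specialization, obtained via~\eqref{eq:ratio_inequality} as you describe, corroborates this) rather than a defect of your argument.
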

The statement can be proved by taking a union bound with respect to the events $\{ \|\Pi^\bot x_2\|_2^2 > \lambda_2 \, r \epsilon \}$ and $\{  x_2^\top \p{L} x_2 > (1 + r \epsilon) \lambda_2\} $, whose probabilities can be easily obtained from the results of Section~\ref{sec:algorithm}. 

\textbf{Eigenspaces}. We also analyze the angle between principal eigenspaces of $L$ and $\c{L}$.  We follow Li~\cite{li1994relative} and split the (lifted) eigendecompositions of $L$ and $\c{L}$ as
\begin{align*}
L &= X \Lambda X^\top = (X_k, X_{k^\bot})
	\begin{pmatrix}
		\Lambda_k &  \\ 
		 & \Lambda_{k^\bot} 
	\end{pmatrix}
	\begin{pmatrix}
		X_k^\top  \\ 
		X_{k^\bot}^\top 
	\end{pmatrix} \\
	 C^\top \c{L} C &= (C^\top \p{X}) \p{\Lambda} (\p{X}^\top C) \\ &= (C^\top \p{X}_k, C^\top \p{X}_{k^\bot})
	 \begin{pmatrix}
	 \p{\Lambda}_k &  \\ 
	 & \p{\Lambda}_{k^\bot} 
	 \end{pmatrix}
	 \begin{pmatrix}
	 \p{X}_k^\top C  \\ 
	 \p{X}_{k^\bot}^\top C
	 \end{pmatrix},
\end{align*}
where $\Lambda_k = \diag{\lambda_1, \ldots, \lambda_k}$ and  $ X_1 = (x_1, \ldots, x_k) $ (analogously for $\p{\Lambda}_k$ and $\p{X}_k$).
The \emph{canonical angles}~\cite{davis1970rotation,stewart1990matrix} between the eigenspaces spanned by $X_k$ and $C^\top \p{X}_k$ are the singlular values of the matrix
\begin{align}
	\Theta( X_k, C^\top\p{X}_k) \defequal 
	\arccos(X_k^\top C^\top\p{X}_k \p{X}_k^\top C X_k)^{-\sfrac{1}{2}}
\end{align}
and moreover, the smaller the sinus of the canonical angles are, the closer the two subspaces lie.

The following theorem characterizes $\vartheta_k = \| \sintheta{X_k}{C^\top \p{X}_k} \|_F^2$, a measure of the miss-alignment of the eigenspaces spanned by $X_k$ and $C^\top \p{X}_k$.
\begin{theorem}
	If $\c{L}$ satisfies the RSS property, then 
	\begin{align*}
		\vartheta_k &\leq \min \left\{ \sum\limits_{2\leq i \leq k} \frac{ \epsilon_i\lambda_i + \lambda_k \| \Pi^\bot x_i \|_2^2 }{\p{\lambda}_{k+1} - \lambda_{k}}, \ \sum\limits_{2\leq i \leq k} \frac{(1 + \epsilon_i)\lambda_i - \lambda_2 \| \Pi x_i \|_2^2 }{\p{\lambda}_{k+1} - \lambda_2} \right\},
	\end{align*}
	for every $k\leq K$. 
	\label{theorem:sintheta}
\end{theorem}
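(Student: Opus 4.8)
The plan is to reduce $\vartheta_k$ to a sum of squared residual energies over the individual eigenvectors $x_i$, and then control each energy through the RSS bound on $x_i^\top \p{L} x_i$ together with the interlacing inequality of Theorem~\ref{theorem:interlacing}. First I would exploit that $C C^\top = I_n$ (a consequence of $\norm{c_i}=1$), which makes the lifted coarse eigenvectors $C^\top \p{x}_j$ orthonormal; hence $P \defequal C^\top \p{X}_k \p{X}_k^\top C$ is an orthogonal projector and
\begin{align*}
\vartheta_k = \trace{X_k^\top (I - P) X_k} = \sum_{i=1}^k \norm{(I-P)x_i}_2^2 .
\end{align*}
Writing $a_{ij} \defequal \inner{C x_i}{\p{x}_j}$ and recalling from~\eqref{eq:first_eigenvalue} that $C^\top \p{x}_1 = x_1$, the $i=1$ term vanishes, which is exactly why the stated sums start at $i=2$. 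Each surviving term splits as $\norm{(I-P)x_i}_2^2 = \norm{\Pi^\bot x_i}_2^2 + \sum_{j>k} a_{ij}^2$, separating the energy that $x_i$ loses to the range-complement of $\Pi$ from the energy that leaks into the high coarse modes $\p{x}_{k+1},\dots,\p{x}_n$.

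The second step is to bound the leakage $\sum_{j>k} a_{ij}^2$. Since $C^\top \p{x}_j$ is an eigenvector of $\p{L} = \Pi L \Pi$ with eigenvalue $\p{\lambda}_j$, one has the energy identity $x_i^\top \p{L} x_i = \sum_{j} \p{\lambda}_j a_{ij}^2$ together with $\norm{\Pi x_i}_2^2 = \sum_j a_{ij}^2$. I would then apply a trace (Ky Fan / Rayleigh) form of the Davis--Kahan $\sintheta{\cdot}{\cdot}$ theorem, in the spirit of Li~\cite{li1994relative}: bounding the high-mode energy from below by $\p{\lambda}_{k+1}\sum_{j>k} a_{ij}^2$ makes the gap $\p{\lambda}_{k+1}$ appear, while interlacing ($\lambda_j \le \p{\lambda}_j$) guarantees that the denominators $\p{\lambda}_{k+1}-\lambda_k$ and $\p{\lambda}_{k+1}-\lambda_2$ stay positive. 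The numerators arise from the RSS upper bound $x_i^\top \p{L} x_i \le (1+\epsilon_i)\lambda_i$ of Definition~\ref{def:restricted_spectral_similarity}, applied after shifting the coarse spectrum: subtracting $\lambda_k \norm{\Pi x_i}_2^2$ produces the first numerator $\epsilon_i \lambda_i + \lambda_k\norm{\Pi^\bot x_i}_2^2$, whereas subtracting $\lambda_2 \norm{\Pi x_i}_2^2$ produces the second numerator $(1+\epsilon_i)\lambda_i - \lambda_2\norm{\Pi x_i}_2^2$. These two shift choices yield the two arguments of the outer minimum, and I would keep whichever is smaller.

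The hard part is to keep the spectral gap at its first power while simultaneously absorbing \emph{both} leakage contributions. The shift that controls the high-mode energy inevitably couples to the range-complement term $\norm{\Pi^\bot x_i}_2^2$ and to the cross terms $a_{ij}$ with $2\le j\le k$, whose signs are not a priori fixed; a naive termwise estimate bounds only the high-mode leakage and leaves a residual of order $\norm{\Pi^\bot x_i}_2^2$. The two levers that resolve this are, again, interlacing (to sign-control the shifted coarse eigenvalues $\p{\lambda}_j - \lambda_k$ for $j\le k$) and the fact that $\Pi$ is a projection, so that the downsampled energies $\sum_{i} a_{ij}^2$ never exceed one (Property~\ref{property:P}); together these let the $\lambda_k$- and $\lambda_2$-shifts reabsorb the range-complement energy into the respective numerators. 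I expect the bookkeeping of these defect terms---showing that the summed misfit energy dominates $(\p{\lambda}_{k+1}-\lambda_k)\,\vartheta_k$ (respectively $(\p{\lambda}_{k+1}-\lambda_2)\,\vartheta_k$)---to be the most delicate part, and it is precisely this step that forces the appearance of two complementary bounds rather than a single one.
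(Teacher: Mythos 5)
Your reduction step already deviates from the quantity the theorem bounds, and in a way that makes the first bound unprovable. Since $CC^\top = I_n$, your projector computation is correct as algebra, but it yields $\trace{X_k^\top (I-P) X_k} = \sum_{2\leq i \leq k} \norm{\Pi^\bot x_i}_2^2 + \sum_{i\leq k}\sum_{j>k} a_{ij}^2$ with $a_{ij} = \p{x}_j^\top C x_i$, whereas the $\vartheta_k$ the paper actually bounds (its Li-lemma step, \eqref{eq:sintheta_0}) is only the second piece, $\norm{\p{X}_{k^\bot}^\top C X_k}_F^2$: the leakage onto the lifted high coarse modes, with $\nullspace{\Pi}$ excluded. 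The difference is not cosmetic. In the lifted picture $\p{L} = \Pi L \Pi$ carries, besides the $\p{\lambda}_j$, the eigenvalue $0$ with multiplicity $N-n$ on $\nullspace{\Pi}$; that mass sits \emph{below} the window $[\lambda_2, \lambda_k]$, so no gap-to-$\p{\lambda}_{k+1}$ argument can absorb it with coefficient one. Concretely, if all $\epsilon_i = 0$ your version of the first bound would require $\sum_{i}\norm{\Pi^\bot x_i}_2^2 \leq \lambda_k \sum_i \norm{\Pi^\bot x_i}_2^2 / (\p{\lambda}_{k+1}-\lambda_k)$, i.e. $\p{\lambda}_{k+1} \leq 2\lambda_k$, which fails precisely in the well-separated regime the theorem targets. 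You must bound the high-mode leakage alone and leave the $\norm{\Pi^\bot x_i}_2^2$ energy where the statement puts it: in the numerator, weighted by $\lambda_k$.

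Second, the first bound cannot be obtained by your per-$i$ shifted Rayleigh argument, and the lever you invoke to rescue it is wrong. Interlacing gives $\p{\lambda}_j \geq \lambda_j$, which does \emph{not} sign-control $\p{\lambda}_j - \lambda_k$ for $j < k$ ($\p{\lambda}_2$ may lie far below $\lambda_k$); after the $\lambda_k$-shift the low-mode cross terms $\sum_{2\leq j\leq k}(\p{\lambda}_j - \lambda_k)a_{ij}^2$ are genuinely negative, and the termwise claim $\sum_{j>k} a_{ij}^2 \leq (\epsilon_i\lambda_i + \lambda_k\norm{\Pi^\bot x_i}_2^2)/(\p{\lambda}_{k+1}-\lambda_k)$ does not follow from RSS plus interlacing: nothing forbids $Cx_k$ from splitting its mass between $\p{x}_2$ and modes above $k$ so that $x_k^\top \p{L} x_k = \lambda_k$ while the leakage is nonzero. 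The paper's resolution is global, not termwise: it sums the RSS inequalities over all $i \leq k$, swaps the order of summation, and uses the \emph{column} energy identity $\sum_{i=1}^N a_{ij}^2 = \norm{C^\top \p{x}_j}_2^2 = 1$ (over the full eigenbasis of $L$ --- this is where $CC^\top = I_n$ earns its keep) to write $\sum_{i\leq k} a_{ij}^2 = 1 - \sum_{i>k} a_{ij}^2$, and then the counting identity $\sum_{j\leq k}\sum_{i>k} a_{ij}^2 = \vartheta_k + \sum_{i\leq k}\norm{\Pi^\bot x_i}_2^2$ converts the defect into exactly the $\lambda_k \norm{\Pi^\bot x_i}_2^2$ numerators. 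Your sketch does name the right ingredient (the unit column energies) but defers the decisive bookkeeping as something you ``expect'' to work, while the mechanism you do specify fails; so the key step of the first bound is missing. Your termwise $\lambda_2$-shift for the second bound, by contrast, is sound --- $a_{i1} = 0$ by \eqref{eq:first_eigenvalue} and $\p{\lambda}_j \geq \lambda_2$ for $j \geq 2$ make every dropped term nonnegative --- and recovers the paper's second bound for the correct (leakage-only) $\vartheta_k$.
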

Both bounds have something to offer: The first is applicable to situations where there is a significant eigenvalue separation between the subspace of interest and neighboring spaces (this condition also appears in classic perturbation analysis~\cite{davis1970rotation}) and has the benefit of vanishing when $n = N$. The second bound on the other hand does not depend on the minimum eigengap between $\lambda_{k}$ and $\lambda_{k+1}$, but on the gap between every eigenvalue $\lambda_i$ in the subspace of interest and $\lambda_{k+1}$, which can be significantly smaller.

We obtain an end-to-end analysis of coarsening by combining Theorem~\ref{theorem:sintheta} with Theorem~\ref{theorem:contraction_similarity_general} and taking a union bound over all $k\leq K$. However, the reader is urged to consider the proof of Corollary~\ref{corollary:spectral_clustering} for a more careful analysis with significantly improved probability estimates. 

\begin{figure}[t]
  \centering
  \vspace{-1mm}
  \includegraphics[width=0.63\columnwidth]{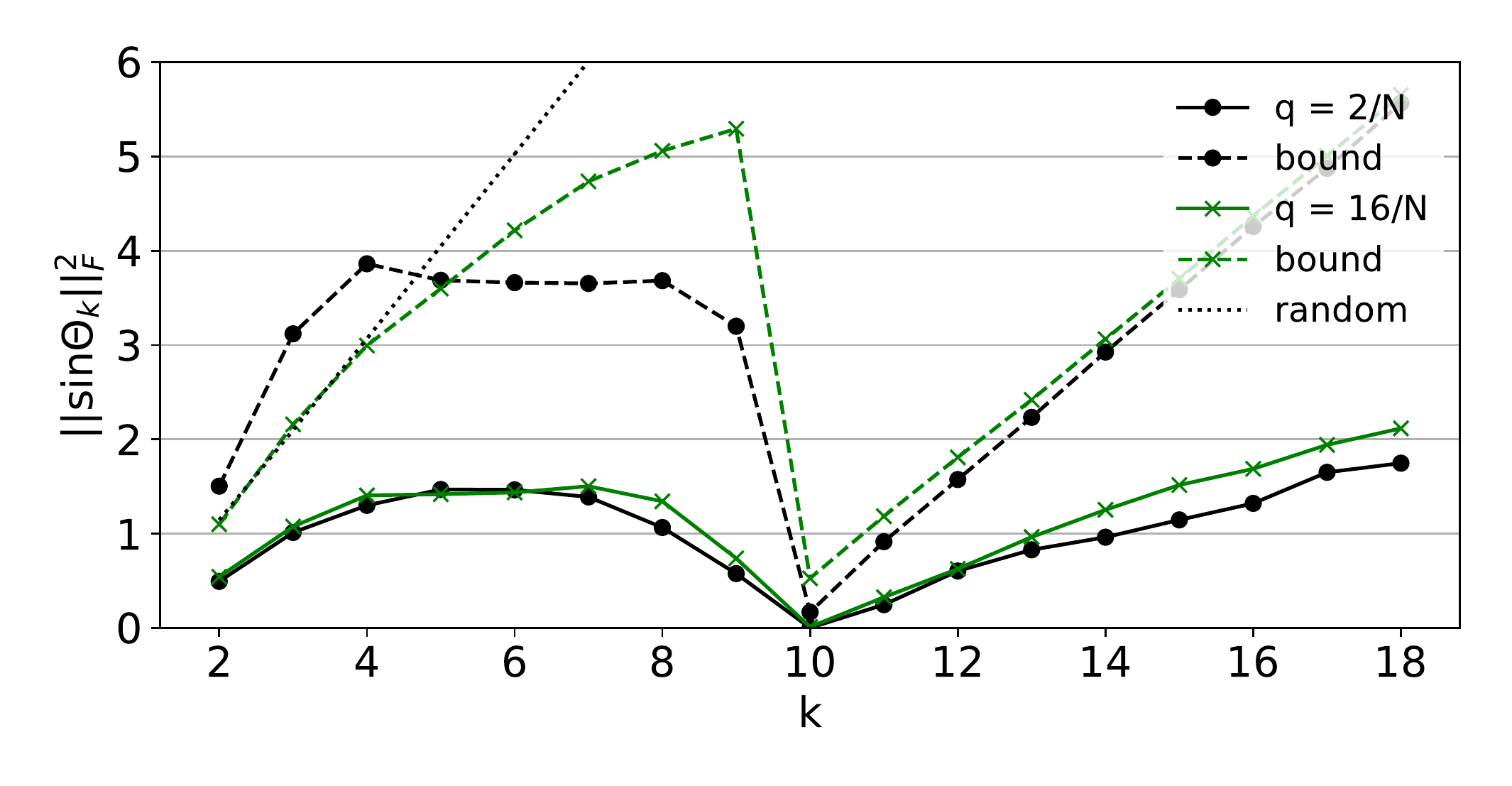}
\caption{The extend to how coarsening preserves eigenspace alignment is a function of eigenvalue distribution.\label{fig:sintheta} \vspace{-4mm}}
\end{figure}

The importance of the eigenvalue distribution can be seen in Figure~\ref{fig:sintheta}, where we examine the alignment of $X_k$ and $C^\top \p{X}_k$ for different $k$ when $r = 0.4$. The figure summarizes the results for 10 stochastic block model graphs, each consisting of $N = 1000$ vertices. These graphs were built by uniformly assigning vertices into $K=10$ communities and connecting any two vertices with probability $p$ or $q$ depending on whether they belong in the same or different communities, respectively. Such constructions are well known to produce eigenvalue distributions that feature a large gap between the $K$ and $K+1$ eigenvalues and small gaps everywhere else. 

Below $K$ the eigenspaces are poorly aligned and not much better than chance (dotted line). As soon as the size of the subspace becomes equal to $K$ however we observe a significant drop, signifying good alignment. This matches the prediction offered by our bounds (dashed line). The phenomenon is replicated for two parametrizations of the stochastic block model, one featuring a low $q$ (and thus a large gap) and one with larger $q$. Due to the smaller gap, in the latter case the trend is slightly less exaggerated.

\section{Implications for spectral clustering}
\label{sec:clustering}

Spectral clustering is a non-linear method for partitioning $N$ points $z_1, z_2, \ldots, z_N \in \Rbb^D$ into $K$ sets $S = \{S_1, S_2, \ldots, S_K\}$. There exist many versions of the algorithm. We consider the ``unnormalized spectral clustering''~\cite{von2007tutorial}: 

\begin{enumerate}[noitemsep,topsep=0pt,parsep=0pt,partopsep=0pt,leftmargin=4.5mm]
	\item Construct a similarity graph with $w_{ij} = e^{- \| z_i - z_j\|_2^2/\sigma^2}$ between vertices $v_i$ and $v_j$. Let $L$ be the combinatorial Laplacian of the graph and write $\Psi = X_K \in \Rbb^{N\times K}$ to denote the matrix of its first $K$ eigenvectors.
	\item Among all cluster assignments $\mathcal{S}$, search for the assignment $S^{*}$ that minimizes the k-means cost: 
	$$ \kmeans{K}{\Psi}{S} = \sum_{k = 1}^K \sum_{v_i, v_j \in S_k } \frac{\| \Psi(i,:) - \Psi(j,:)\|_2^2}{2\, |S_k|} $$
\end{enumerate} 

Though a naive implementation of the above algorithm scales with $O(N^3)$, the acceleration of spectral clustering has been an active topic of research. A wide-range of sketching techniques have been proposed\cite{boutsidis2015spectral,tremblay2016compressive}, arguably one of the fastest known algorithms utilizes coarsening. Roughly, the algorithm involves: (\emph{i}) hierarchically coarsening the input graph (using edge contractions) until the latter reaches a target size; (\emph{ii}) solving the clustering problem is the small dimension; (\emph{iii}) lifting the solution back to the is lifted to the original domain; and (\emph{iv}) performing some fast refinement to attain the final clustering.

In the following, we provide theoretical guarantees on the solution quality of the aforementioned scheme for a single coarsening level. To the extend of our knowledge, this is the first time that such an analysis has been carried out. 

To perform the analysis, we suppose that 
$$ S^{*} = \argmin_{S \in \mathcal{S}} \kmeans{K}{\Psi}{S} \ \  \text{and} \ \ \p{S}^{*} = \argmin_{S \in \mathcal{S}} \kmeans{K}{\p{\Psi} }{S}$$
are the (optimal) clustering assignments obtained by solving the $k$-means using as input the original eigenvectors $X_K$ and the lifted eigenvectors $\p{\Psi} = C^\top \p{X}_K$ of $\c{L}$, respectively. We then measure the quality of $\p{S}^{*}$ by examining how far the correct minimizer $\kmeans{K}{\Psi}{{S}^*}$ is to $\kmeans{K}{\Psi}{\p{S}^{*}}$. Note that the latter quantity utilizes the correct eigenvectors as points and necessarily $\kmeans{K}{\Psi}{{S}^*} \leq \kmeans{K}{\Psi}{\p{S}^{*}}$. Boutsidis et al.~\cite{boutsidis2015spectral} noted that, if the two quantities are close then, despite the assignments themselves possibly being different, they both feature the same quality with respect to the k-means objective. 

We prove the following approximation result: 
\begin{corollary}
Consider a bounded degree graph with $\lambda_K \leq 0.5 \min_{e_{ij} \in \mathcal{E}} \left\lbrace \frac{d_i + d_j}{2} + w_{ij}\right\rbrace $ and suppose that it is coarsened by \alg\, using a heavy-edge potential. For sufficiently large $N$, any feasible ratio $r$, and $\epsilon > 0$, 
$$ \left[ \kmeans{K}{\Psi}{\p{S}^*}^{\sfrac{1}{2}} - \kmeans{K}{\Psi}{S^*}^{\sfrac{1}{2}} \right]^{2} \leq \sum_{k=2}^K \frac{8 \epsilon r \lambda_k }{\delta_K} $$
with probability at least $ 1- \frac{\rhomax}{\epsilon} \left( 1 + \frac{6 + 4 \lambda_K - 8 \, c_3}{\davg \rhomin} \right)$, where $\delta_K = \lambda_{K+1} - \lambda_{K}$ and $c_3 = {\sum_{k=2}^K \lambda_k^2}/{\sum_{k=2}^K \lambda_k}$.
\label{corollary:spectral_clustering}
\end{corollary}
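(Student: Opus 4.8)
The plan is to combine the eigenspace bound of Theorem~\ref{theorem:sintheta} with a stability argument for $k$-means in the spirit of Boutsidis et al.~\cite{boutsidis2015spectral}, and to close with a single concentration step. I would first write the cost as $\kmeans{K}{\Psi}{S} = \|(I - P_S)\Psi\|_F^2$, where $P_S$ is the orthogonal projector onto the span of the normalized cluster-indicator vectors of $S$; this is the standard restatement of the pairwise-distance formula. Both point sets $\Psi = X_K$ and $\p{\Psi} = C^\top \p{X}_K$ have orthonormal columns — the latter because $C C^\top = I_n$ and $\p{X}_K$ is orthonormal — so $\kmeans{K}{\cdot}{S}$ is invariant under right multiplication of the points by an orthogonal matrix $R$. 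Using the triangle inequality, the optimality of $\p{S}^*$ for $\p{\Psi}$, and this rotation invariance, I obtain $\kmeans{K}{\Psi}{\p{S}^*}^{\sfrac{1}{2}} - \kmeans{K}{\Psi}{S^*}^{\sfrac{1}{2}} \leq 2 \min_R \|\Psi - \p{\Psi} R\|_F$. Since both factors are orthonormal, $\min_R \|\Psi - \p{\Psi} R\|_F^2 = 2\sum_i (1 - \cos\theta_i) \leq 2 \sum_i \sin^2\theta_i = 2\vartheta_K$, where $\theta_i$ are the canonical angles between $X_K$ and $C^\top \p{X}_K$. Squaring gives $[\kmeans{K}{\Psi}{\p{S}^*}^{\sfrac{1}{2}} - \kmeans{K}{\Psi}{S^*}^{\sfrac{1}{2}}]^2 \leq 8\vartheta_K$.

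Next I would apply the first bound of Theorem~\ref{theorem:sintheta}, using the interlacing $\p{\lambda}_{K+1} \geq \lambda_{K+1}$ of Theorem~\ref{theorem:interlacing} to replace the denominator $\p{\lambda}_{K+1} - \lambda_K$ by the smaller $\delta_K = \lambda_{K+1} - \lambda_K$. This yields $8\vartheta_K \leq \frac{8}{\delta_K} Z$ with $Z \defequal \sum_{i=2}^K \big( \epsilon_i \lambda_i + \lambda_K \|\Pi^\bot x_i\|_2^2 \big)$, where the $i=1$ term drops because $\Pi^\bot x_1 = 0$. Thus on the event $\{ Z \leq \epsilon r \sum_{k=2}^K \lambda_k\}$ the claimed bound $\sum_{k=2}^K \sfrac{8\epsilon r \lambda_k}{\delta_K}$ holds, and it only remains to lower-bound the probability of this event.

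For the probability, I would rewrite each summand in closed form. Since $\p{L} = \Pi L \Pi$, $\Pi x_i = x_i - \Pi^\bot x_i$ and $L x_i = \lambda_i x_i$, a short computation gives $\epsilon_i \lambda_i = x_i^\top \p{L} x_i - \lambda_i = (\Pi^\bot x_i)^\top L\, \Pi^\bot x_i - 2\lambda_i \|\Pi^\bot x_i\|_2^2$, so $Z$ is an explicit functional of the random vectors $\Pi^\bot x_i$. I would then bound $\E{Z}$ by summing the per-$k$ expectation estimates obtained (via Markov) from the heavy-edge specialization of Theorem~\ref{theorem:contraction_similarity_general} and from the companion bound on $\E{\|\Pi^\bot x_k\|_2^2}$ in Section~\ref{subsec:analysis}, using $1 - e^{-4\rhomax T/N} \leq 4\rhomax r$ from~\eqref{eq:ratio_inequality} to introduce $r$. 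A single application of Markov's inequality to $Z$ at the threshold $\epsilon r \sum_{k=2}^K \lambda_k$ then produces the stated probability; the weighted average $c_3 = \sfrac{\sum_k \lambda_k^2}{\sum_k \lambda_k}$ appears precisely because the $\sum_k \lambda_k^2$ contribution to $\E{Z}$ is normalized by the $\sum_k \lambda_k$ in the threshold.

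The main obstacle is this last step. Controlling each $\epsilon_k$ and each $\|\Pi^\bot x_k\|_2^2$ separately and then taking a union bound over $k$ would lose a factor of order $K$; the improvement comes from bounding the \emph{aggregate} $Z$ with one Markov inequality. The delicate part is therefore the expectation bookkeeping — keeping the sum structure intact so that $c_3$ emerges, and assembling the extremal quantities $\rhomin$, $\rhomax$, $\davg$ with the correct constants rather than replacing each term by its worst case.
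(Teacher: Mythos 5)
Your proposal is correct and follows essentially the same route as the paper's proof: the Boutsidis-style triangle-inequality stability bound and the orthogonal-alignment estimate $\min_R \|\Psi - \p{\Psi}R\|_F^2 \leq 2\vartheta_K$ (which you re-derive via $1-\cos\theta_i \leq \sin^2\theta_i$ rather than citing Yu et al.), followed by the first bound of Theorem~\ref{theorem:sintheta} with $\p{\lambda}_{K+1}\geq\lambda_{K+1}$, and then a single Markov inequality applied to the aggregate $Z=\sum_{k=2}^K\bigl(\epsilon_k\lambda_k+\lambda_K\|\Pi^\bot x_k\|_2^2\bigr)$ rather than a union bound over $k$. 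You correctly identified that this aggregate treatment is exactly where the paper's ``more careful analysis'' gains over the naive union bound and why the weighted average $c_3$ appears, and your expectation bookkeeping (with $1-e^{-4\rhomax T/N}\leq 4\rhomax r$ introducing $r$) reproduces the stated constants.
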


The theorem therefore provides conditions such that the clustering assignment produced with the aid of coarsening has quality that is close to that of the original in terms of absolute error, even {without} refinement. Practically, our result states that $\p{S}^{*}$ is a good candidate for the final solution as long as the graph has almost constant degree (such that $\rhomin \approx 1 \approx \rhomax$) and it is $K$ clusterable (i.e., the gap $\lambda_{K+1} - \lambda_{K}$ is large).

\begin{figure*}[t]
  \centering
  \vspace{-1mm}
  \hfill
  \subfloat[]{\includegraphics[width=0.35\textwidth]{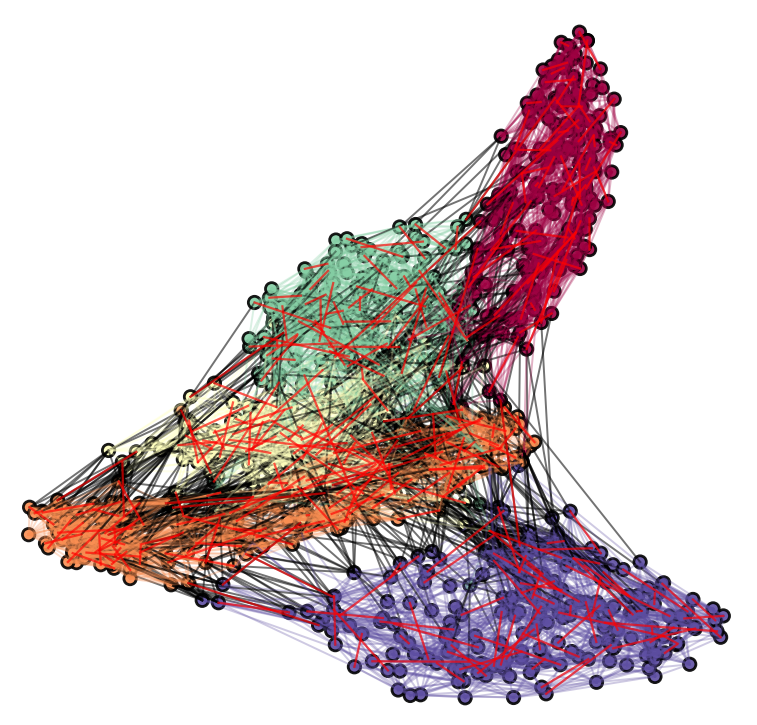}\label{fig:mnist5_graph}}
  ~
  \subfloat[]{\includegraphics[width=0.63\textwidth]{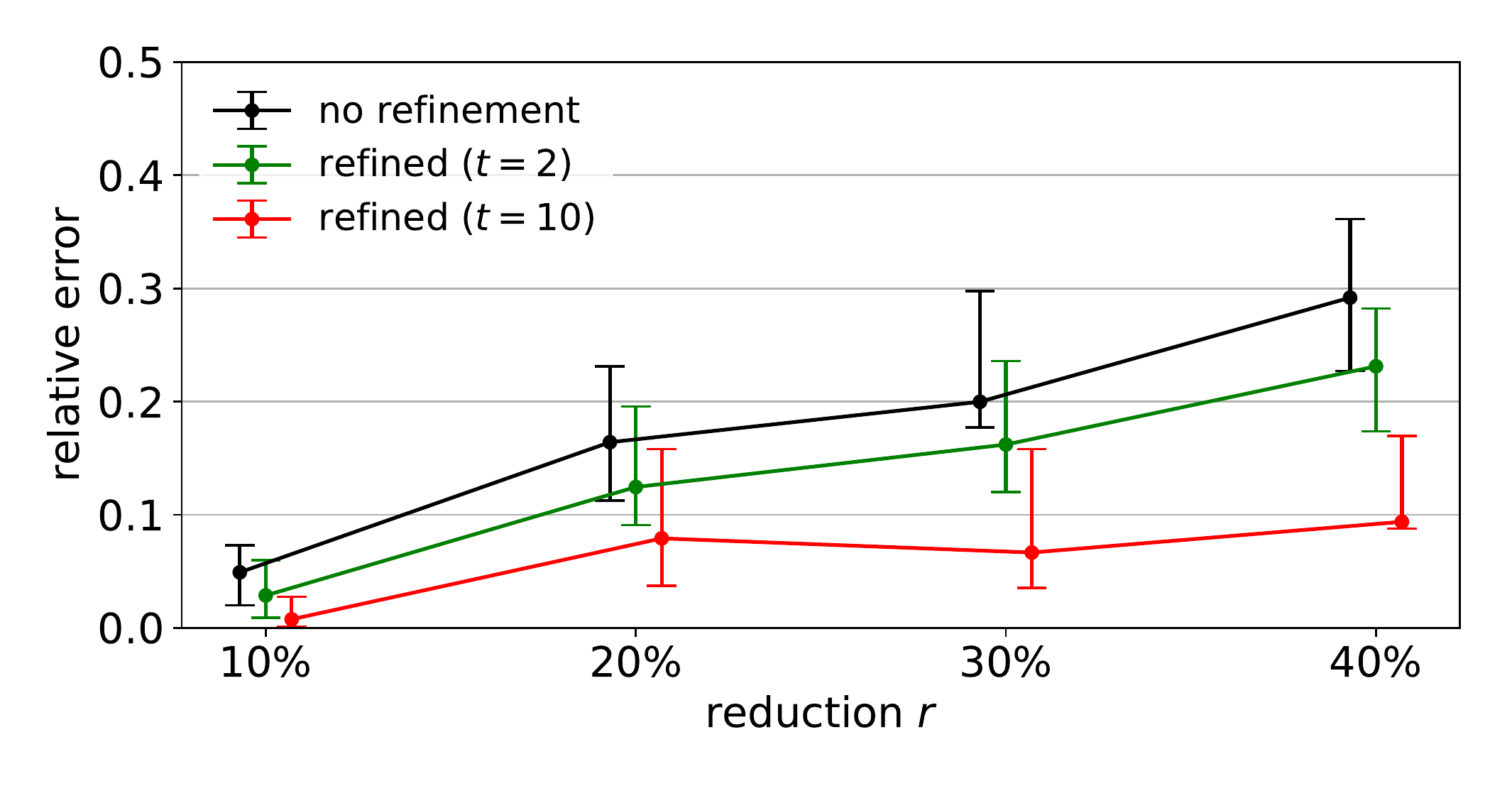}\label{fig:mnist5}}
  \hfill
 \vspace{2mm}
\caption{ (a) An instance of the clustering problem we considered. The graph is constructed from $N=1000$ images, each depicting a digit between 0 and 4 from the MNIST database. Contracted edges are shown in red. (b) The relative k-means error induced by coarsening as a function of dimensionality reduction $r$ (in percentage) for clustering 5 MNIST digits. The three lines correspond to the lifted eigenvectors with and without refinement. The errorbars span one standard deviation. A small horizontal offset has been inserted in order to diminish overlap. }
\end{figure*}

From the algebraic formulation of the k-means cost it follows that, when the number of clusters is $\kappa < K$ whereas the feature matrix remains $\Psi$, then $\kmeans{\kappa}{\Psi}{S^{*}} \geq K - \kappa$ (this is because $\Psi$ has exactly $K$ unit singular values, whereas the k-means clustering cannot do better than a rank $\kappa$ approximation of $\Psi$~\cite{ding2004k,boutsidis2015randomized}). Under the conditions of Corollary~\ref{corollary:spectral_clustering} and with the same probability: 
\begin{align}
		\left[\frac{{\kmeans{K}{\Psi}{\p{S}^{*}}}^{\frac{1}{2}} - {\kmeans{K}{\Psi}{S^{*}}}^{\frac{1}{2}}}{{\kmeans{\kappa}{\Psi}{S^*}}^{\sfrac{1}{2}}} \right]^{2} 
		\hspace{-2mm} \leq  \sum_{k = 2}^K  \frac{8 \epsilon r \lambda_k }{\delta_K \, (K - \kappa)},\notag 
\end{align}
which is a relaxed relative error guarantee.

Figure~\ref{fig:mnist5} depicts the growth of the actual relative error with $r$. The particular experiment corresponds to a clustering problem involving $N=1000$ images, each depicting a selected digit between 0 and 4 from the MNIST database (i.e., $K = 5$). We constructed a $12$-nearest neighbor similarity graph and repeated the experiment 10 times, each time using a different image set, selected uniformly at random. This setting produces a simple, but non-trivial, clustering problem featuring some overlaps between clusters (see Figure~\ref{fig:mnist5_graph}).

We observed that most remaining error occurred at coarsened vertices lying at cluster boundaries. This can be eliminated by only a few iterations of local smoothing. Though more advanced techniques might be preferable from a computational perspective, such as Chebychev or ARMA graph filters~\cite{shuman2011chebyshev,isufi2017autoregressive}, 
for illustration purposes, we here additionally perform $t = \{2,10\}$ steps of a simple power iteration scheme, yielding an $O(t M (K-1))$ overhead. The experiment confirms that most errors are removed after few iterations. 

\section{Discussion}

The main message of our work is the following: coarsening locally perturbs individual eigenvectors; however, if carefully constructed, it can leave well-separated principal eigenspaces relatively untouched. 

From perspective of manifold learning, where $G$ is a the discretization of the data manifold $\mathcal{M}$, our results can be interpreted as a statement of manifold approximation: the coarse information about the global geometry of $\mathcal{M}$ (as captured by eigenvectors) agrees with the information carried by the original discretization.  

The main limitation of our analysis is that the concentration estimates given by Theorem~\ref{theorem:contraction_similarity_general} are overly pessimistic. Proving tighter concentration is complicated by the dependency structure of the binomial random variables in the sum. In addition, our results are not currently applicable to multi-level coarsening, meaning that the maximum attainable dimensionality reduction ratio is below \hspace{-0.3mm}$\sfrac{1}{2}$. 
We are investigating ways to circumvent these issues. 


\appendix


\section{Appendix}
\subsection{Proof of Theorem~\ref{theorem:interlacing}}

\begin{proof}
The Courant-Fisher min-max theorem for $L$ reads  
\begin{align}
	\lambda_k = \min\limits_{\dimension{U} = k}  \max\limits_{x \in \spanning{U}} \left\{  \frac{x^\top L x}{x^\top x}    \right\}, \label{eq:courant_lambda_k}
\end{align}
whereas the same theorem for $\c{L}$ reads
\begin{align}
	\p{\lambda}_k = \min\limits_{\dimension{\c{U}} = k}   \max\limits_{\c{x} \in \spanning{\c{U}}} \ \left\{ \frac{\c{x}^\top \c{L} \c{x}}{\c{x}^\top \c{x}} \right\} 
	&= \min\limits_{\dimension{\c{U}} = k} \max\limits_{C x \in \spanning{\c{U}}} \left\{ \frac{ x^\top \Pi L \Pi x}{x^\top \Pi x}  \right\}  \notag \\
	&= \min\limits_{\dimension{U} = k}  \max\limits_{x \in \spanning{U}} \left\{  \frac{x^\top L x}{x^\top x} | \, x = \Pi x  \right\}, \notag  
\end{align}
where in the second equality we set $ \c{L} = C L C^\top$ and $\c{x} = C x $ and the third equality holds since $\Pi$ is a projection matrix (see Property~\ref{property:P}). Notice how, with the exception of the constraint that $x = \Pi x$, the final optimization problem is identical to the one for $\lambda_k$, given in~\eqref{eq:courant_lambda_k}. As such, the former's solution must be strictly larger (since it is a more constrained problem) and we have that $\p{\lambda}_k \geq \lambda_k$.
\end{proof}

\subsection{Proof of Theorem~\ref{theorem:contraction_similarity_general}}

We now proceed to derive the main statement of Theorem~\ref{theorem:contraction_similarity_general}.
Our approach will be to control $x_k^\top \p{L} x_k $ through its expectation.

\begin{lemma}
For any $k$ such that
$ \lambda_k \leq 0.5 \, \min_{ e_{ij} \in \mathcal{E}} \left\lbrace \frac{\deg_{i} + \deg_{j}}{2} + w_{ij} \right\rbrace$
the matrix $\c{L}$ produced by \alg\, abides to 
\begin{align}
 	\Prob{\lambda_k \leq x_k^\top \p{L} x_k \leq \lambda_k (1 + \epsilon)} \geq 1 - \frac{ \vartheta_k(T, \phi)}{4 \epsilon}, 
\end{align}
where 
\begin{align}
	\vartheta_k(T, \phi) = \max_{e_{ij} \in \mathcal{E}} \left\lbrace \Prob{ e_{ij} \in \mathcal{E}_F } \, \frac{\deg_i + \deg_j + 2 (w_{ij}-\lambda_k)}{w_{ij}} \right\rbrace.
\end{align}
\label{lemma:probability_estimate}
\end{lemma}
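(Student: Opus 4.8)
The plan is to control the deviation $x_k^\top \p{L} x_k - \lambda_k$ directly, exploiting both the eigenvector relation $L x_k = \lambda_k x_k$ and the special structure of the projection $\Pi$ when the coarsening frame is a matching. Writing $\Pi = I - \Pi^\bot$ and expanding $\p{L} = \Pi L \Pi$, the eigenvector identity $x_k^\top L = \lambda_k x_k^\top$ collapses the cross terms and yields the clean expression
\begin{align*}
x_k^\top \p{L} x_k - \lambda_k = (\Pi^\bot x_k)^\top L (\Pi^\bot x_k) - 2\lambda_k \norm{\Pi^\bot x_k}_2^2 .
\end{align*}
This already reframes the task: I only need to understand the action of $L$ on the single vector $\Pi^\bot x_k$, which lies in the range of $\Pi^\bot$.

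The key structural step is to decompose $\Pi^\bot$ over the contracted edges. Since \alg\ returns a matching, $\Pi^\bot = \tfrac{1}{2}\sum_{e_{ij}\in\mathcal{E}_F} b_{ij} b_{ij}^\top$, where $b_{ij}$ is the signed edge-incidence vector ($+1$ at $v_i$, $-1$ at $v_j$, else $0$); these $b_{ij}$ are mutually orthogonal because no two frame edges share a vertex. The decisive observation is that the incidence vectors of distinct frame edges are in fact $L$-orthogonal, i.e.\ $b_{ij}^\top L b_{pq} = 0$ for distinct $e_{ij}, e_{pq} \in \mathcal{E}_F$. Indeed $b_{ij}^\top L b_{pq}$ is a combination of $w_{ip}, w_{iq}, w_{jp}, w_{jq}$, hence nonzero precisely when $e_{pq} \in \mathcal{N}_{ij}$; but whenever \alg\ contracts $e_{ij}$ it deletes all of $\mathcal{N}_{ij}$ from the candidate set, so no neighboring edge can ever join the frame. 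Both quadratic forms therefore diagonalize: with $g_{ij} = x_k(i) - x_k(j) = b_{ij}^\top x_k$ and $b_{ij}^\top L b_{ij} = \deg_i + \deg_j + 2 w_{ij}$,
\begin{align*}
(\Pi^\bot x_k)^\top L (\Pi^\bot x_k) = \tfrac{1}{4}\!\!\sum_{e_{ij}\in\mathcal{E}_F}\!\! g_{ij}^2 (\deg_i + \deg_j + 2 w_{ij}), \qquad \norm{\Pi^\bot x_k}_2^2 = \tfrac{1}{2}\!\!\sum_{e_{ij}\in\mathcal{E}_F}\!\! g_{ij}^2 ,
\end{align*}
and substituting gives $x_k^\top \p{L} x_k - \lambda_k = \tfrac{1}{4}\sum_{e_{ij}\in\mathcal{E}_F} g_{ij}^2 (\deg_i + \deg_j + 2 w_{ij} - 4\lambda_k)$.

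From here both halves of the claim follow. For the lower bound, the spectral hypothesis $\lambda_k \leq 0.5\min_{e_{ij}}\{(\deg_i + \deg_j)/2 + w_{ij}\}$ forces every bracket $\deg_i + \deg_j + 2 w_{ij} - 4\lambda_k$ to be nonnegative, so $x_k^\top \p{L} x_k - \lambda_k \geq 0$ holds deterministically; this is also what makes $x_k^\top \p{L} x_k - \lambda_k$ a nonnegative random variable and hence Markov-amenable. For the upper bound I would take expectation over the randomness of \alg: by linearity,
\begin{align*}
\E{x_k^\top \p{L} x_k - \lambda_k} = \tfrac{1}{4}\sum_{e_{ij}\in\mathcal{E}} \Prob{e_{ij}\in\mathcal{E}_F}\, g_{ij}^2 (\deg_i + \deg_j + 2 w_{ij} - 4\lambda_k),
\end{align*}
then write each summand as $w_{ij} g_{ij}^2$ times a (nonnegative) per-edge ratio, bound the ratio by its maximum over edges, and use $\sum_{e_{ij}\in\mathcal{E}} w_{ij} g_{ij}^2 = x_k^\top L x_k = \lambda_k$. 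Relaxing $-4\lambda_k$ to $-2\lambda_k$ matches the factor $\deg_i + \deg_j + 2(w_{ij}-\lambda_k)$ and yields $\E{x_k^\top \p{L} x_k - \lambda_k} \leq \tfrac{\lambda_k}{4}\,\vartheta_k(T,\phi)$. A final application of Markov's inequality gives $\Prob{x_k^\top \p{L} x_k - \lambda_k > \epsilon\lambda_k} \leq \vartheta_k(T,\phi)/(4\epsilon)$, which together with the deterministic lower bound is exactly the statement.

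The step I expect to be the main obstacle is the $L$-orthogonality of distinct frame edges, namely matching the analytic condition $b_{ij}^\top L b_{pq} = 0$ to the combinatorial neighborhood $\mathcal{N}_{ij}$ that \alg\ excises at each contraction; once this decoupling is in place, the diagonalization, the sign of the brackets, the expectation, and Markov are all routine. I note that $\Prob{e_{ij}\in\mathcal{E}_F}$ is kept symbolic inside $\vartheta_k$ here: converting it into the $1 - e^{-c_1 T/N}$ factors of Theorem~\ref{theorem:contraction_similarity_general} is a separate analysis of \alg's iteration (facilitated by the null-edge device) and is not required for this Lemma.
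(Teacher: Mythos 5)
Your proof is correct and takes essentially the same route as the paper's: the identical expansion $x_k^\top \p{L} x_k - \lambda_k = (\Pi^\bot x_k)^\top L (\Pi^\bot x_k) - 2\lambda_k \norm{\Pi^\bot x_k}_2^2$, the same reduction to the per-frame-edge sum $\tfrac{1}{4}\sum_{e_{ij}\in\mathcal{E}_F} w_{ij}(x_k(i)-x_k(j))^2\,(\deg_i+\deg_j+2w_{ij}-4\lambda_k)/w_{ij}$, and the same expectation-plus-Markov finish, with your incidence-vector decomposition $\Pi^\bot = \tfrac{1}{2}\sum_{e_{ij}\in\mathcal{E}_F} b_{ij}b_{ij}^\top$ and the $L$-orthogonality $b_{ij}^\top L b_{pq}=0$ being a tidier packaging of the paper's $T_1/T_2$ splitting, which rests on the very same fact you isolate (that \alg\ removes all of $\mathcal{N}_{ij}$, so distinct frame edges do not interact through $L$). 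You also correctly observed that the stated $\vartheta_k$ (with $-\lambda_k$) is a valid relaxation of the sharper $-2\lambda_k$ factor the computation actually produces, a discrepancy present in the paper itself.
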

%
\begin{proof}
Denote by $\Pi^\bot$ the projection matrix defined such that $\Pi + \Pi^\bot = I$. We can then write
\begin{align}
	x_k^\top \p{L} x_k = x_k^\top \Pi L  \Pi x_k = x_k^\top (I - \Pi^\bot)  L (I - \Pi^\bot) x_k 
	&= x_k^\top L x_k - 2 x_k^\top L \Pi^\bot x_k + x_k^\top \Pi^\bot L \Pi^\bot x_k \notag \\
	&= \lambda_k - 2 \lambda_k x_k^\top \Pi^\bot x_k + x_k^\top \Pi^\bot L \Pi^\bot x_k 	\label{eq:theorem1_equality}
\end{align}
Let us now consider term $x_k^\top \Pi^\bot L \Pi^\bot x_k$, where for compactness we set $y = \Pi^\bot x_k$.
\begin{align}
	y^\top L y &= \sum_{e_{ij} \in \mathcal{E}} w_{ij} (y(i) - y(j))^2 = \underbrace{\sum_{e_{ij} \in \mathcal{E}_F} w_{ij} (y(i) - y(j))^2}_{T_1} + \underbrace{ \sum_{v_i \in \mathcal{V}_F} \sum_{v_j \notin \mathcal{V}_F} w_{ij} y(i)^2}_{T_2}. 
\end{align}
In the last step above, we exploited the fact that $y(i) = 0$ whenever $v_i \notin \mathcal{V}_F$. 

Since $\mathcal{E}_F$ is a matching of $\mathcal{E}$, any coarsening that occurs involves a merging of two adjacent vertices $v_i,v_j$ with $(\Pi x)(i) = (\Pi x)(j)$, implying that for every $e_{ij} \in \mathcal{E}_F$:
\begin{align}
	(y(i) - y(j))^2 = ((\Pi^\bot x_k)(i) + (\Pi x_k)(i) - (\Pi^\bot x_k)(j) - (\Pi x_k)(j))^2 = (x(i) - x(j))^2 \notag 
\end{align}
and therefore
%
\begin{align}
	T_1 &= \sum_{e_{ij} \in \mathcal{E}} b_{ij} \, w_{ij} \, (x_k(i) - x_k(j))^2,
	\label{eq:rec_2}
\end{align}
with $b_{ij}$ a Bernoulli random variable indicating whether $e_{ij} \in \mathcal{E}_F$. For $T_2$, notice that the terms in the sum correspond to boundary edges and, moreover, whenever $e_{ij} \in \mathcal{E}_F$ all vertices adjacent to $v_i$ and $v_j$ do not belong in $\mathcal{V}_F$. Another way to express $T_2$ therefore is 
\begin{align}
	T_2 &= 	\sum_{e_{ij} \in \mathcal{E}} b_{ij} \left( y(i)^2 \hspace{-4mm} \sum_{e_{i \ell} \in \mathcal{E}, e_{i \ell} \neq e_{ij}} \hspace{-4mm} w_{i \ell} + y(j)^2 \hspace{-4mm} \sum_{e_{j \ell} \in \mathcal{E}, e_{j \ell} \neq e_{ij}} \hspace{-4mm} w_{j \ell} \right)  \notag \\
	&= \sum_{e_{ij} \in \mathcal{E}} b_{ij} \left( \Big(x_k(i) - \frac{x_{k}(i) + x_k(j)}{2} \Big)^2 (\deg_i - w_{ij}) + \Big(x_k(j) - \frac{x_{k}(i) + x_k(j)}{2} \Big)^2 (\deg_j - w_{ij})\right)  \notag \\
	&= \sum_{e_{ij} \in \mathcal{E}} b_{ij} \, w_{ij} (x_k(i) - x_k(j))^2 \, \frac{\deg_i + \deg_j - 2 w_{ij}}{4 w_{ij}}.
	\label{eq:rec_3}
\end{align}
A similar result also holds for the remaining term $x_k^\top \Pi^\bot x_k = \| \Pi^\bot x_k\|_2^2$ of~\eqref{eq:theorem1_equality}: 
\begin{align}
	\| \Pi^\bot x_k\|_2^2 
	&= \sum_{e_{ij} \in \mathcal{E}} b_{ij} \left( \left( x_k(i)- \frac{x_k(i) + x_k(j)}{2} \right)^2 + \left( x_k(i)- \frac{x_k(i) + x_k(j)}{2} \right)^2 \right) \notag \\
	&= \sum_{e_{ij} \in \mathcal{E}} b_{ij} \, w_{ij} \left( x_k(i) - x_k(j)\right)^2 \frac{1}{ 2 w_{ij}}.
	\label{eq:rec_4}
\end{align}
If we substitute \eqref{eq:rec_2}, \eqref{eq:rec_3}, and~\eqref{eq:rec_4} into~\eqref{eq:theorem1_equality} we find that 
\begin{align}
	x_k^\top \p{L} x_k - \lambda_k  &= \sum_{e_{ij} \in \mathcal{E}} b_{ij} \, w_{ij} (x_k(i) - x_k(j))^2 \, \left( 1 + \frac{\deg_i + \deg_j - 2 w_{ij}}{4 w_{ij}} - \frac{\lambda_k}{w_{ij} }\right) \notag \\
	&= \frac{1}{4} \sum_{e_{ij} \in \mathcal{E}} b_{ij} \, w_{ij} (x_k(i) - x_k(j))^2 \, \left( \frac{\deg_i + \deg_j + 2 (w_{ij}-2\lambda_k)}{w_{ij}}\right) 
		\label{eq:randomized_positive}
\end{align}	
and furthermore
\begin{align}
	\E{x_k^\top \p{L} x_k} - \lambda_k 
	&= \frac{1}{4} \sum_{e_{ij} \in \mathcal{E}} \Prob{ e_{ij} \in \mathcal{E}_F } \left( \frac{\deg_i + \deg_j + 2 (w_{ij}-2\lambda_k)}{w_{ij}} \right) w_{ij} (x_k(i) - x_k(j))^2.
\end{align}
The expression above is always smaller than
\begin{align}
 \E{x_k^\top \p{L} x_k} - \lambda_k 
 &\leq \frac{\lambda_k}{4} \max_{e_{ij} \in \mathcal{E}} \left\lbrace \Prob{ e_{ij} \in \mathcal{E}_F } \, \frac{\deg_i + \deg_j + 2 (w_{ij}-2\lambda_k)}{w_{ij}} \right\rbrace = \frac{\lambda_k}{4} \, \vartheta_k(T, \phi),
 \label{eq:intermediate}
\end{align}
where $\vartheta_k(T, \phi)$ is a function of the sampling probabilities, the eigenvalue $\lambda_k$, and the degree distribution of $G$.
Noticing that~\eqref{eq:randomized_positive} is a non-negative random variable whenever $\lambda_k \leq 0.5 \min_{e_{ij} \in \mathcal{E}} \frac{\deg_{i}	+ \deg_{j}}{2} + w_{ij}/2$ (the condition is equivalent to $\deg_i + \deg_j + 2 (w_{ij}-2\lambda_k)> 0$ implying that $x_k^\top \p{L} x_k - \lambda_k$ is a sum of non-negative terms) and using Markov's inequality, we find that 
\begin{align}
	\Prob{x_k^\top \p{L} x_k \geq \lambda_k (1 + \epsilon)} = \Prob{\frac{x_k^\top \p{L} x_k - \lambda_k}{\lambda_k} \geq \epsilon} \leq \frac{\E{x_k^\top \p{L} x_k} - \lambda_k}{\epsilon \lambda_k} \leq \frac{ \vartheta_k(T, \phi)}{4 \epsilon},
\end{align}
which gives the desired probability bound.
\end{proof}

The RSS constant therefore depends on the probability that each edge $e_{ij}$ is contracted. This is given by:
%
\begin{lemma}
	At the termination of \alg, each edge $e_{ij}$ of $\mathcal{E}$ can be found in $\mathcal{E}_F$ with probability
	\begin{align}
	p_{ij} \, \frac{1- e^{-T P_{ij}}}{P_{ij}} \leq \Prob{ e_{ij} \in \mathcal{E}_F } = \Prob{ b_{ij}  = 1} \leq \, p_{ij} \frac{1- e^{-T P_{ij}}}{1 - e^{-P_{ij}}} 
	\end{align}
	where $p_{ij} = \phi_{ij} / \Phi$ and $P_{ij} = \sum_{ e_{pq} \in \mathcal{N}_{ij} } p_{pq}$. 
	\label{lemma:probability}
\end{lemma}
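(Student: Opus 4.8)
The plan is to compute $\Prob{e_{ij}\in\mathcal{E}_F}$ by viewing \alg\ as a race among the edges of $\mathcal{N}_{ij}$. First I would establish the combinatorial characterization of the event. Note that $\mathcal{N}$ is symmetric (if $e_{pq}\in\mathcal{N}_{ij}$ then $e_{ij}\in\mathcal{N}_{pq}$) and that $e_{ij}\in\mathcal{N}_{ij}$. Hence $e_{ij}$ remains in the candidate set $\mathcal{C}$ until the first iteration at which some edge of $\mathcal{N}_{ij}$ is contracted, and $e_{ij}$ is itself contracted (so that $e_{ij}\in\mathcal{E}_F$) exactly when that first contracted member of $\mathcal{N}_{ij}$ is $e_{ij}$. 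Thus $b_{ij}=1$ if and only if $e_{ij}$ is the first edge of $\mathcal{N}_{ij}$ to be selected within the $T$ iterations.

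Next I would exploit the role of the null edge. Because $\Phi$ is frozen at its initial value and $p_{\text{null}}$ is incremented by exactly the mass of every removed edge, the conditional probability of selecting any still-available edge $e_{pq}\in\mathcal{C}$ at any iteration equals exactly $p_{pq}=\phi_{pq}/\Phi$, independent of the past. Writing $R_t$ for the probability that no edge of $\mathcal{N}_{ij}$ has been selected during the first $t-1$ iterations, the characterization above decomposes the event over the (disjoint) first-success times and gives $\Prob{e_{ij}\in\mathcal{E}_F}=p_{ij}\sum_{t=1}^{T}R_t$, since at each surviving iteration $e_{ij}$ is picked with probability $p_{ij}$.

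It then remains to control $R_t$. At any iteration where $e_{ij}$ is still alive, the probability that the step ends the race equals the mass of $\mathcal{N}_{ij}\cap\mathcal{C}$, which lies between $p_{ij}$ and $P_{ij}$; treating this competing mass as the full $P_{ij}$ yields the geometric law $R_t=(1-P_{ij})^{t-1}$ and the closed form $\Prob{e_{ij}\in\mathcal{E}_F}=p_{ij}\,g(T)$ with $g(T)\defequal(1-(1-P_{ij})^{T})/P_{ij}$. The two stated estimates then follow from elementary inequalities. The lower bound is immediate from $(1-P_{ij})^{T}\le e^{-TP_{ij}}$, which makes the numerator of $g(T)$ at least $1-e^{-TP_{ij}}$. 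For the upper bound I would compare increments against $h(T)\defequal(1-e^{-TP_{ij}})/(1-e^{-P_{ij}})$: since $g(1)=h(1)=1$ and $g(T)-g(T-1)=(1-P_{ij})^{T-1}\le e^{-(T-1)P_{ij}}=h(T)-h(T-1)$, induction gives $g(T)\le h(T)$, i.e. the claimed $p_{ij}\,h(T)$.

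The main obstacle is the dependency structure hidden in $R_t$: a contraction of an edge \emph{outside} $\mathcal{N}_{ij}$ can still delete neighbors of $e_{ij}$ from $\mathcal{C}$, so the competing mass $\mathcal{N}_{ij}\cap\mathcal{C}$ erodes over time in a path-dependent way and the geometric law is only an idealization. Fortunately this asymmetry favors one side: removing competitors can only prolong $e_{ij}$'s survival, so $R_t\ge(1-P_{ij})^{t-1}$ holds unconditionally and the lower bound is fully rigorous. The delicate direction is the upper bound, where one must argue—leaning on the buffer created by deleting the entire two-hop set $\mathcal{N}_{ij}$ at each contraction, which is precisely the property used in the proof of Lemma~\ref{lemma:probability_estimate}—that this erosion does not push $\Prob{e_{ij}\in\mathcal{E}_F}$ past $p_{ij}\,h(T)$. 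I would isolate this stationarity-of-competition claim as the technical heart of the lemma and treat steps one through three as bookkeeping around it.
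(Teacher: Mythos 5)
Your route is the same as the paper's: decompose over the first iteration at which some edge of $\mathcal{N}_{ij}$ is drawn, use the null edge to argue that the per-iteration selection probability of any live edge is frozen at $p_{pq}$, reduce $\Prob{b_{ij}=1}$ to $p_{ij}\sum_{t=1}^{T}R_t$ with a geometric survival law, and finish with elementary exponential comparisons. Your increment induction showing $g(T)\leq h(T)$ is an equivalent substitute for the paper's argument, which brackets the per-step survival factor $a_{ij}=\prod_{e_{pq}\in\mathcal{N}_{ij}}(1-p_{pq})$ between $1-P_{ij}$ (Weierstrass) and $e^{-P_{ij}}$ (via $\log(1-p)\leq -p$) and then invokes monotonicity of $f(x)=(1-x^{T})/(1-x)$ on $[0,1]$. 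Your lower bound is complete and in fact more scrupulous than the paper's: conditioned on any history in which $e_{ij}$ is still a candidate, the one-step stopping hazard $\sum_{e_{pq}\in\mathcal{N}_{ij}\cap\mathcal{C}}p_{pq}$ is at most $P_{ij}$, so $R_t\geq(1-P_{ij})^{t-1}$ holds unconditionally and the left inequality of the lemma follows rigorously.

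The upper bound you do not prove---you explicitly defer the ``stationarity-of-competition'' claim---and that is a genuine gap in the attempt as written: as you observe, contractions outside $\mathcal{N}_{ij}$ can erode the competing set (e.g.\ on a six-vertex path, contracting the far edge deletes $e_{ij}$'s competitor but not $e_{ij}$), after which $R_t$ decays at a rate as slow as $p_{ij}$ rather than $P_{ij}$, and nothing in your sketch caps $\sum_t R_t$ by $h(T)$. You should know, however, that the paper does not close this gap either. Its proof simply asserts the recursion $\Prob{X_{ij}(t)}=\Prob{X_{ij}(t-1)}\prod_{e_{pq}\in\mathcal{N}_{ij}}(1-p_{pq})$, which (i) treats the mutually exclusive within-iteration selections as independent Bernoulli trials---the exact conditional survival factor is $1-\sum_{e_{pq}\in\mathcal{N}_{ij}\cap\mathcal{C}}p_{pq}$, not a product---and (ii) keeps the product over all of $\mathcal{N}_{ij}$ at every iteration, i.e.\ silently assumes the competing mass never erodes. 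Within that idealized constant-hazard model both stated bounds follow; outside it, only the lower bound survives scrutiny. So the step you isolated as the technical heart is precisely the step the paper handles by fiat; a fully rigorous version would need either an argument that erosion of $\mathcal{N}_{ij}\cap\mathcal{C}$ prior to the removal of $e_{ij}$ is suitably rare, or a weakened or conditional statement of the upper inequality. Your diagnosis of where the difficulty lives is correct and sharper than the published proof's treatment of it.
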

%
\begin{proof}
The event $X_{ij}(t)$ that edge $e_{ij}$ is still in the candidate set $\mathcal{C}$ at the end of the $t$-th iteration is 
\begin{align}
	\Prob{ X_{ij}(t)} &= \Prob{ X_{ij}(t-1) \cap \left\lbrace e_{ij} \text{ is not selected at } t\right\rbrace} \notag \\
	&= \Prob{ X_{ij}(t-1)} \prod_{pq \in \mathcal{N}_{ij}} (1-p_{pq}) = \prod_{\tau = 1}^{t} \left(\prod_{pq \in \mathcal{N}_{ij}} (1-p_{pq}) \right)= a_{ij}^t.
\end{align} 
Therefore, the probability that $e_{ij}$ is selected after $T$ iterations can be written as 
\begin{align}
	\Prob{ e_{ij} \in \mathcal{E}_F } &= \sum_{t = 1}^T \Prob{ e_{ij} \text{ is selected at } t} \notag \\
	&= \sum_{t = 1}^T p_{ij} \Prob{ X_{ij}(t-1)} \notag \\
	&= p_{ij} \sum_{t = 0}^{T-1} a_{ij}^t = p_{ij} \, \frac{1 - a_{ij}^{T}}{1 - a_{ij}}. 
	\label{eq:probability_edge}
\end{align}
According to the Weierstrass product inequality
\begin{align}
a_{ij} = \prod_{ e_{pq} \in \mathcal{N}_{ij} } (1-p_{pq}) 
&\geq 1 - \sum_{ e_{pq} \in \mathcal{N}_{ij} } p_{pq}  
\end{align}
and since the function $f(x) = (1 - x^T)/(1-x)$ is monotonically increasing in $[0,1]$ and setting $P_{ij} = \sum_{ e_{pq} \in \mathcal{N}_{ij} } p_{pq} $ we have that
$$ \frac{1 - a_{ij}^{T}}{1 - a_{ij}} \geq \frac{1 - (1-P_{ij})^T }{P_{ij}} = \frac{1 - e^{\log{(1-P_{ij})}T} }{P_{ij}} \geq \frac{1 - e^{-T P_{ij} }}{P_{ij}}, $$ 
%
where the last step takes advantage of the series expansion $\log{\left( 1-p \right)} = -\sum_{i = 1}^\infty p^{i}/i \leq - p$.
Similarly, for the upper bound 
\begin{align}
a_{ij} 
= \prod_{ e_{pq} \in \mathcal{N}_{ij} } (1-p_{pq}) 
&= e^{ \log{\left(  \prod_{ e_{pq} \in \mathcal{N}_{ij} } (1-p_{pq}) \right)} } = e^{ \sum_{{ e_{pq} \in \mathcal{N}_{ij} }} \log{\left( 1-p_{pq} \right)} } \leq e^{ -\sum_{{ e_{pq} \in \mathcal{N}_{ij} }} p_{pq} } = e^{-P_{ij}} 
\end{align}
and therefore $ \frac{1 - a_{ij}^{T}}{1 - a_{ij}} \leq \frac{1- e^{-T\,P_{ij}}}{1 - e^{-P_{ij}}}, $ as claimed.
\end{proof}

Based on Lemma~\ref{lemma:probability}, the expression of $\vartheta_k(T, \phi)$ is
\begin{align}
	\vartheta_k(T, \phi) 
	&\leq \max_{e_{ij} \in \mathcal{E}} \left\lbrace p_{ij} \frac{1- e^{-T P_{ij}}}{1 - e^{-P_{ij}}} \, \frac{\deg_i + \deg_j + 2 (w_{ij}-2\lambda_k)}{w_{ij}} \right\rbrace \notag \\
	&\leq \max_{e_{ij} \in \mathcal{E}} \left\lbrace P_{ij} \, \frac{1- e^{-T P_{ij}}}{1 - e^{-P_{ij}}} \right\rbrace \max_{e_{ij} \in \mathcal{E}} \left\lbrace \frac{p_{ij}}{P_{ij}} \, \frac{\deg_i + \deg_j + 2 (w_{ij}-2\lambda_k)}{w_{ij}} \right\rbrace. 
\end{align}
The function $f(P_{ij}) = P_{ij} \, \frac{1 - e^{-T P_{ij}}}{1 - e^{- P_{ij}}}$ has a positive derivative in the domain of interest and thus it attains its maximum at $\Pmax$ when $P_{ij}$ is also maximized.  
Setting $c_1 = N \Pmax$ and after straightforward algebraic manipulation, we find:    
\begin{align}
	\vartheta_k(T, \phi) 
	&\leq  \Pmax \, \frac{1- e^{- c_1 T /N}}{1 - e^{-\Pmax}} \max_{e_{ij} \in \mathcal{E}} \left\lbrace \frac{p_{ij}}{P_{ij}} \, \frac{\deg_i + \deg_j + 2 (w_{ij}-2\lambda_k)}{w_{ij}} \right\rbrace \notag \\
	&= \Pmax \, \frac{1- e^{- c_1 T /N}}{1 - e^{-\Pmax}} \max_{e_{ij} \in \mathcal{E}} \left\lbrace \frac{\phi_{ij}}{ \sum_{e_{pq} \in \mathcal{N}_{ij}} \phi_{pq}} \, \left( \frac{ \sum_{e_{pq} \in \mathcal{N}_{ij}} w_{pq} }{w_{ij}} + 3 -\frac{4\lambda_k}{w_{ij}} \right) \right\rbrace.
\end{align}

For any potential function and graph such that $\Pmax = O(1/N)$, at the limit $c_2 = \frac{\Pmax}{1 - e^{-\Pmax}} {\rightarrow} 1$ and the above expression reaches
\begin{align}
	\lim_{N \rightarrow \infty} \vartheta_k(T, \phi) 
	\leq (1- e^{- c_1 T/N}) \, \max_{e_{ij} \in \mathcal{E}} \left\lbrace \frac{\phi_{ij}}{ \sum_{e_{pq} \in \mathcal{N}_{ij}} \phi_{pq}} \, \left( \frac{ \sum_{e_{pq} \in \mathcal{N}_{ij}} w_{pq} }{w_{ij}} + 3 -\frac{4\lambda_k}{w_{ij}} \right) \right\rbrace.
\end{align}
The final probability estimate is achieved by using Lemma~\ref{lemma:probability_estimate} along with the derived bound on $\vartheta_k(T, \phi)$.

\subsection{Proof of Theorem~\ref{theorem:eigenvalue}}

We adopt a variational approach and reason that, since  
\begin{align}
\p{\lambda}_k &= \min\limits_{U}  \max\limits_{x} \left\{  \frac{x^\top L x}{x^\top x}, \, x \in {U} \text{ and } x \neq 0\, | \, \dimension{U} = k  \, | \, x = \Pi x  \right\},  
\end{align}
for any matrix $Z$ the following inequality holds
\begin{align}
\p{\lambda}_k \leq \max\limits_{x} \left\{  \frac{x^\top L x}{x^\top x}  \, | \, x \in \spanning{Z} \text{ and } x \neq 0\, \right\}  
\label{eq:bbb1}
\end{align}
as long as the columnspace of $Z$ is of dimension $k$ and does not intersect with the nullspace of $\Pi$. 

Write $\p{X}_{k-1}$ to denote the $n \times (k-1)$ matrix with the $k-1$ first eigenvectors of $\c{L}$ and further set $Y_{k-1} = C^\top \p{X}_{k-1}$.
We will consider the $N\times k$ matrix $Z$ with 
\begin{align}
Z(:,i) = 
\begin{cases}
C^\top \p{x}_i       & \quad \text{if } i < k\\
z  &  \quad \text{if } i = k,
\end{cases}  
\quad \text{where} \quad 
z = \Pi (I - Y_{k-1} Y_{k-1}^\top) x_k.
\end{align}
It can be confirmed that $Z$'s columnspace meets the necessary requirements. 
Now, we can express any $x \in \spanning{Z}$ as $x = Y_{k-1} a + b z = \Pi (Y_{k-1} a + b z)$ with $ \|a\|^2 + b^2\|z\|^2 = 1$ and therefore
\begin{align}
x^\top L x 
&=  (a^\top Y_{k-1}^\top + b z^\top) \Pi {L} \Pi ( Y_{k-1} a + b z) \notag \\
&=  (a^\top Y_{k-1}^\top + b z^\top) \p{L} ( Y_{k-1} a + b z) \notag \\
&= a^\top Y_{k-1}^\top \p{L} Y_{k-1} a + b^2 \, z^\top \p{L} z + 2 b \, z^\top \p{L} Y_{k-1} a \notag \\
&= a^\top Y_{k-1}^\top \p{L} Y_{k-1} a + b^2 \, z^\top \p{L} z,
\end{align} 
where in the last step we exploited the fact that, by construction, $z$ does not lie in the span of $\p{X}_{k-1}$ (matrix $\p{L}$ does not rotate its own eigenvectors).
Since $Y_{k-1} a \in \spanning{\p{X}_{k-1}}$, the first term in the equation above in bounded by $\p{\lambda}_{k-1}$ and the equality is attained only when $a(k-1) = 1$ (in which case $b$ must be zero).
By the variational argument however, we are certain that the upper bound in \eqref{eq:bbb1} has to be at least as large as $\p{\lambda}_{k-1}$, implying that
\begin{align}
\p{\lambda}_k \leq \max\left\lbrace \p{\lambda}_{k-1}, \frac{z^\top L z}{z^\top z} \right\rbrace
\end{align} 
with the two cases corresponding to the choices $a(k-1) = 1$ and $b = 1$, respectively.
In addition, we have that 
\begin{align}
z^\top L z &= x_k^\top (I - Y_{k-1} Y_{k-1}^\top) \Pi L \Pi (I - Y_{k-1} Y_{k-1}^\top) x_k = \sum_{i \geq k } \p{\lambda}_i \, ( \p{x}_i^\top C x_k)^2 
\end{align} 
and $\|z\|^2 = \|\Pi (I - Y_{k-1} Y_{k-1}^\top) x_k\|^2 = \sum_{i \geq k} ( \p{x}_i^\top C x_k)^2$,
meaning that 
\begin{align}
\frac{z^\top \p{L} z }{z^\top z}
&= \frac{\sum_{i \geq k } \p{\lambda}_i \, ( \p{x}_i^\top C x_k)^2}{\sum_{i \geq k} ( \p{x}_i^\top C x_k)^2}  
\leq \frac{x_k^\top \p{L} x_k }{\sum_{i \geq k} ( \p{x}_i^\top C x_k)^2} 
\end{align} 
and therefore the relation $\p{\lambda}_k \leq \max\left\lbrace \p{\lambda}_{k-1},  (1 + \epsilon_k) \frac{\lambda_k}{\sum_{i \geq k} \theta_{ki}}\right\rbrace$ holds whenever $k \leq K$.

\subsection{Proof of Theorem~\ref{theorem:sintheta}}

\begin{proof}
Li's Lemma~\cite{li1994relative} allows to express $\vartheta_k$ based on the squared inner products $(\p{x}_j^\top C x_i)^2$ of the eigenvectors $x_i$ of the Laplacian ${L}$ and the lifted eigenvectors $C^\top \p{x}_j$ of the coarsened Laplacian $\c{L}$.
\begin{align}
	\vartheta_k = \norm{ \sintheta{X_k}{C^\top \p{X}_k} }_F^2 
	&= \norm{\p{X}_{k^\bot}^\top C X_k}_F^2 
	= \sum_{i \leq k} \sum_{j > k} (\p{x}_j^\top C x_i)^2
\label{eq:sintheta_0}
\end{align}
Moreover, the summed RSS inequalities for each $i \leq k$ give:
\begin{align}
	\sum_{i \leq k} (1 + \epsilon_i)\lambda_i \geq \sum_{i \leq k}  x_i^\top \p{L} x_i 
	&=  \sum_{i \leq k} \sum_{j = 1}^n \p{\lambda}_j (\p{x}_j^\top C x_i)^2 =   \sum_{j \leq k} \p{\lambda}_j \sum_{i \leq k} (\p{x}_j^\top C x_i)^2 + \sum_{j > k} \p{\lambda}_j \sum_{i \leq k} (\p{x}_j^\top C x_i)^2.
\label{eq:sintheta_1}
\end{align}
To continue, we use the equality 
\begin{align} 
	\sum_{2 \leq j \leq k} \sum_{ i \leq k} (\p{x}_j^\top C x_i)^2 = \sum_{2\leq i \leq k} \left(\| \Pi x_i \|_2^2 - \sum_{j > k} (\p{x}_j^\top C x_i)^2 \right)
\end{align}
based on which 
\begin{align}
	\p{\lambda}_{k+1} \sum_{j > k} \sum_{i \leq k} (\p{x}_j^\top C x_i)^2 + \p{\lambda}_2 \sum_{2 \leq i \leq k} \left(\| \Pi x_i \|_2^2 - \sum_{j > k} (\p{x}_j^\top C x_i)^2 \right)
	\leq \sum_{i \leq k} (1 + \epsilon_i)\lambda_i = \sum_{2 \leq i \leq k} (1 + \epsilon_i)\lambda_i. 
\end{align}
Our first sin$\Theta$ bound is obtained by using the inequality $\lambda_2 \leq \p{\lambda}_2$ and re-arranging the terms:
\begin{align}
	\norm{ \sintheta{X_k}{C^\top \p{X}_k} }_F^2 \leq \sum\limits_{2\leq i \leq k}  \frac{ (1 + \epsilon_i)\lambda_i - \lambda_2 \| \Pi x_i \|_2^2 }{\p{\lambda}_{k+1} - \lambda_2} 
	\label{eq:sintheta_2}
\end{align}
For the second bound, we instead perform the following manipulation 
\begin{align}
	\sum_{j \leq k} \p{\lambda}_j \sum_{i \leq k} (\p{x}_j^\top C x_i)^2 \geq \sum_{j \leq k} \lambda_j \sum_{i \leq k} (\p{x}_j^\top C x_i)^2  &=  \sum_{j \leq k} \lambda_j \left( 1 - \sum_{i > k} (\p{x}_j^\top C x_i)^2 \right) \notag \\
	&\geq \sum_{j \leq k} \lambda_j - \lambda_k  \sum_{i \leq k} \left( \| \Pi^\bot x_i\|_2^2 + \sum_{j \geq k} (\p{x}_j^\top C x_i)^2 \right),
\end{align}
which together with~\eqref{eq:sintheta_0} and~\eqref{eq:sintheta_1} results to
\begin{align}
	\norm{ \sintheta{X_k}{C^\top \p{X}_k} }_F^2 
	\leq \sum\limits_{i \leq k}  \frac{  (1 + \epsilon_i)\lambda_i - \lambda_i  + \lambda_k \| \Pi^\bot x_i \|_2^2 }{\p{\lambda}_{k+1} - \lambda_{k}} 
	= \sum\limits_{2\leq i \leq k} \frac{ \epsilon_i\lambda_i + \lambda_k \| \Pi^\bot x_i \|_2^2 }{\p{\lambda}_{k+1} - \lambda_{k}}.
	\label{eq:sintheta_3}
\end{align}
The final bound is obtained as the minimum of \eqref{eq:sintheta_2} and~\eqref{eq:sintheta_3}.
\end{proof}

\subsection{Proof of Corollary~\ref{corollary:spectral_clustering}}

\begin{proof}
The proof follows a known argument in the analysis of spectral clustering first proposed by Boutsidis~\cite{boutsidis2015spectral} and later adapted by Martin et al.~\cite{martin2017fast}. In particular, these works proved that:
\begin{align}
	\kmeans{K}{\Psi}{\p{S}^*}^{\sfrac{1}{2}} &\leq \kmeans{K}{\Psi}{S^*}^{\sfrac{1}{2}} + 2 \, \gamma_K,
	\label{eq:kmeans_1}
\end{align} 
with $\gamma_K = \| \Psi - \p{\Psi} Q\|_F = \| X_K - C^\top \p{X}_K Q\|_F$ and $Q$ being some unitary matrix of appropriate dimensions.
However, as demonstrated by Yu and coauthors~\cite{yu2014useful}, it is always possible to find a unitary matrix $Q$ such that
\begin{align}
	\gamma_K^2 = \norm{ X_K - C^\top \p{X}_K Q}_F^2 \leq 2 \norm{\sintheta{X_K}{C^\top \p{X}_K} }_F^2 \leq 2 \sum_{k=2}^{K} \frac{ \epsilon_k\lambda_k + \lambda_K \| \Pi^\bot x_k \|_2^2 }{\delta_K}
\end{align}
where the last inequality follows from Theorem~\ref{theorem:sintheta} and $\p{\lambda}_{K+1} \geq \lambda_{K+1}$. At this point, we could opt to take a union bound with respect to the events $\{\epsilon_k \geq \epsilon\}$ and $\{\| \Pi^\bot x_k\|_2^2 \geq \epsilon \}$ using the results of Section~\ref{sec:algorithm}. A more careful analysis however follows the steps of the proof of Theorem~\ref{theorem:contraction_similarity_general} simultaneously for all terms: 
\begin{align}
\sum_{k=2}^{K} \E{\epsilon_k}\lambda_k + \lambda_K \E{\| \Pi^\bot x_k \|_2^2} &= \sum_{k=2}^{K}  \sum_{e_{ij} \in \mathcal{E}} \Prob{e_{ij} \in \mathcal{E}_F} w_{ij} (x_k(i) - x_k(j))^2 \left[ \frac{d_i + d_j + 2 w_{ij} + 2\lambda_K - 4 \lambda_k}{4 w_{ij}} \right] \notag \\
&\hspace{-3cm}\leq \sum_{k=2}^{K}  \lambda_k \max_{e_{ij} \in \mathcal{E}} \left\lbrace \Prob{e_{ij} \in \mathcal{E}_F} \left[ \frac{d_i + d_j + 2 w_{ij} + 2\lambda_K - 4 \lambda_k}{4 w_{ij}} \right] \right\rbrace \notag \\
&\hspace{-3cm}\leq \sum_{k=2}^{K}  \lambda_k \, \Pmax \frac{1 - e^{-T\Pmax}}{1 - e^{-\Pmax}} \, \max_{e_{ij} \in \mathcal{E}} \left\lbrace \frac{\phi_{ij}}{ \sum_{e_{pq} \in \mathcal{N}_{ij}} \phi_{pq}} \, \left( \frac{ \sum_{e_{pq} \in \mathcal{N}_{ij}} w_{pq} }{w_{ij}} + 3 + \frac{2 \lambda_K - 4\lambda_k}{w_{ij}} \right) \right\rbrace \notag  \\ 
&\hspace{-3cm}= c_2 \frac{1- e^{- c_1 T /N} }{4}\sum_{k=2}^{K}  \lambda_k \, \max_{e_{ij} \in \mathcal{E}} \left\lbrace \frac{\phi_{ij}}{ \sum_{e_{pq} \in \mathcal{N}_{ij}} \phi_{pq}} \, \left( \frac{ \sum_{e_{pq} \in \mathcal{N}_{ij}} w_{pq} }{w_{ij}} + 3 + \frac{2 \lambda_K - 4\lambda_k}{w_{ij}} \right) \right\rbrace, 
\end{align}
where as before $c_1 = N \Pmax$ and $c_2 = \Pmax / (1 - e^{-\Pmax})$. Assuming further that a heavy-edge potential is used, $N$ is sufficiently large, and $G$ has bounded degree such that $c_1 = 4 \rhomax = O(1)$, the above simplifies to 
\begin{align}
\E{\gamma_K^2} 
&\leq \frac{1- e^{- 4 \rhomax T /N} }{2 \, \delta_K} \sum_{k=2}^{K}  \lambda_k \, \left( 1 +  \max_{e_{ij} \in \mathcal{E}} \left\lbrace \frac{3 w_{ij} + 2 \lambda_K - 4\lambda_k}{\sum_{e_{pq} \in \mathcal{N}_{ij}} w_{pq}} \right\rbrace \right) \notag \\
&\leq \frac{1- e^{- 4 \rhomax T /N} }{2 \, \delta_K} \sum_{k=2}^{K}  \lambda_k \, \left( 1 +  \max_{e_{ij} \in \mathcal{E}} \left\lbrace \frac{6 + 4 \lambda_K - 8\lambda_k}{ \davg \rhomin} \right\rbrace \right).
\end{align}
The last inequality used the relation $\min_{e_{ij}} \, \sum_{e_{pq} \in \mathcal{N}_{ij}} w_{pq} = \rhomin \davg / 2$ and the fact that $w_{ij} \leq 1.$
Setting $c_3 = \frac{\sum_{k=2}^{K} \lambda_k^2}{\sum_{k=2}^{K} \lambda_k}$, gives
\begin{align}
\E{\gamma_K^2} 
&\leq \frac{1- e^{- 4 \rhomax T /N} }{2 \,\delta_K} \left(\sum_{k=2}^{K}  \lambda_k\right)  \, \left( 1 + \frac{6 + 4 \lambda_K - 8\, c_3}{\davg \rhomin} \right).
\end{align}
From Markov's inequality, then 
\begin{align}
	\Prob{\left[\kmeans{K}{\Psi}{\p{S}^*}^{\sfrac{1}{2}} - \kmeans{K}{\Psi}{S^*}^{\sfrac{1}{2}}\right]^2 \geq  \epsilon   \sum_{k=2}^{K}  \frac{2 \lambda_k (1- e^{- 4 \rhomax T /N})}{\delta_K}  } \leq \frac{1}{\epsilon} \left( 1 + \frac{6 + 4 \lambda_K - 8 \, c_3}{\davg \rhomin} \right).
\end{align} 
The final result follows by the inequality $1- e^{- 4 \rhomax T /N} \leq 4 r \rhomax$ (see~\eqref{eq:ratio_inequality}). 
\end{proof}



\bibliographystyle{plain}
\bibliography{references}


\end{document}